\def\figref#1{figure~\ref{#1}}
\def\secref#1{section~\ref{#1}}
\def\eqref#1{equation~\ref{#1}}
\def\1{\bm{1}}
\def\vx{{\bm{x}}}
\def\vz{{\bm{z}}}
\DeclareMathAlphabet{\mathsfit}{\encodingdefault}{\sfdefault}{m}{sl}
\SetMathAlphabet{\mathsfit}{bold}{\encodingdefault}{\sfdefault}{bx}{n}
\def\sX{{\mathbb{X}}}
\newcommand{\E}{\mathbb{E}}
\newcommand{\KL}{D_{\mathrm{KL}}}
\newcommand{\beq}{\begin{equation}}
\newcommand{\eeq}{\end{equation}}
\newcommand{\Ell}{{\mathcal L}}
\renewcommand{\figref}[1]{Figure~\ref{#1}}
\newcommand{\appref}[1]{Appendix~\ref{#1}}
\renewcommand{\secref}[1]{Section~\ref{#1}}
\renewcommand{\eqref}[1]{Eq.~\ref{#1}}
\newcommand{\tikzcircle}[2][black,fill=white]{\tikz[baseline=-0.5ex]\draw[#1,radius=#2, thick] (0,0) circle ;}
\definecolor{purp}{HTML}{7E1E9C}
\newtheorem{theorem}{Theorem}
\newtheorem{corollary}{Corollary}
\title{Trading Information between Latents \\ in Hierarchical Variational Autoencoders}
\author{Tim Z. Xiao\\
University of T\"ubingen \& IMPRS-IS\\
\texttt{zhenzhong.xiao@uni-tuebingen.de} \\
\And
Robert Bamler\\
University of T\"ubingen\\
\texttt{robert.bamler@uni-tuebingen.de}
}
\begin{document}

\maketitle

\begin{abstract}
  Variational Autoencoders (VAEs) were originally motivated~\citep{kingma2013auto} as probabilistic generative models in which one performs approximate Bayesian inference.
  The proposal of $\beta$-VAEs~\citep{higgins2016beta} breaks this interpretation and generalizes VAEs to application domains beyond generative modeling (e.g., representation learning, clustering, or lossy data compression) by introducing an objective function that allows practitioners to trade off between the information content (``bit rate'') of the latent representation and the distortion of reconstructed data~\citep{alemi2018fixing}.
  In this paper, we reconsider this rate/distortion trade-off in the context of hierarchical VAEs, i.e., VAEs with more than one layer of latent variables.
  We identify a general class of inference models for which one can split the rate into contributions from each layer, which can then be tuned independently.
  We derive theoretical bounds on the performance of downstream tasks as functions of the individual layers' rates and verify our theoretical findings in large-scale experiments.
  Our results provide guidance for practitioners on which region in rate-space to target for a given application.
\end{abstract}

\section{Introduction}

Variational autoencoders (VAEs) \citep{kingma2013auto, rezende2014stochastic} are a class of deep generative models that are used, e.g., for density modeling \citep{takahashi2018student}, clustering \citep{jiang2016variational}, nonlinear dimensionality reduction of scientific measurements \citep{laloy2017inversion}, data compression \citep{balle2016end}, anomaly detection \citep{xu2018unsupervised}, and image generation \citep{razavi2019generating}.
VAEs (more precisely, $\beta$-VAEs~\citep{higgins2016beta}) span such a diverse set of application domains in part because they can be tuned to a specific task without changing the network architecture, in a way that is well understood from information theory~\citep{alemi2018fixing}.

The original proposal of VAEs~\citep{kingma2013auto} motivates them from the perspective of generative probabilistic modeling and approximate Bayesian inference.
However, the generalization to $\beta$-VAEs breaks this interpretation as they are no longer trained by maximizing a lower bound on the marginal data likelihood.
These models are better described as neural networks that are trained to learn the identity function, i.e., to make their output resemble the input as closely as possible.
This task is made nontrivial by introducing a so-called (variational) information bottleneck~\citep{alemi2016deep, tishby2015deep} at one or more layers, which restricts the information content that passes through these layers.
The network activations at the information bottleneck are called latent representations (or simply ``latents''), and they split the network into an encoder part (from input to latents) and a decoder part (from latents to output).
This separation of the model into an encoder and a decoder allows us to categorize the wide variety of applications of VAEs into three domains:
\begin{enumerate}
  \item \textbf{data reconstruction tasks}, i.e.,
    applications that involve \emph{both the encoder and the decoder};
    these include various nonlinear inter- and extrapolations (e.g., image upscaling, denoising, or inpainting), and VAE-based methods for lossy data compression;
  \item \textbf{representation learning tasks}, i.e.,
    applications that involve \emph{only the encoder};
    they serve a downstream task that operates on the (typically lower dimensional) latent representation, e.g., classification, regression, visualization, clustering, or anomaly detection; and
  \item \textbf{generative modeling tasks}, i.e.,
    applications that involve \emph{only the decoder} are less common but include generating new samples that resemble training data.
\end{enumerate}

The information bottleneck incentivizes the VAE to encode information into the latents efficiently by removing any redundancies from the input.
How agressively this is done can be controlled by tuning the strength~$\beta$ of the information bottleneck~\citep{alemi2018fixing}.
Unfortunately, information theory distinguishes relevant from redundant information only in a quantitative way that is agnostic to the qualitative features that each piece of information represents about some data point. %
In practice, many VAE-architectures~\citep{deng2017factorized, yingzhen2018disentangled, balle2018variational} try to separate qualitatively different features into different parts of the latent representation by making the model architecture reflect some prior assumptions about the semantic structure of the data.
This allows downstream applications from the three domains discussed above to more precisely target specific qualitative aspects of the data by using or manipulating only the corresponding part of the latent representation.
However, in this approach, the degree of detail to which each qualitative aspect is encoded in the latents can be controlled at most indirectly by tuning network layer sizes.

In this paper, we argue both theoretically and empirically that the three different application domains of VAEs identified above require different trade-offs in the amount of information that is encoded in each part of the latent representation.
We propose a method to independently control the information content (or ``rate'') of each layer of latent representations, generalizing the rate/distortion theory of $\beta$-VAEs~\citep{alemi2018fixing} for VAEs with more than one layer of latents (``hierarchical VAEs'' or HVAEs for short).
We identify the most general model architecture that is compatible with our proposal and analyze how both theoretical performance bounds and empirically measured performances in each of the above three application domains depend on how rate is distributed across layers.

\begin{figure}[t]
  \centering
  \includegraphics[width=\textwidth]{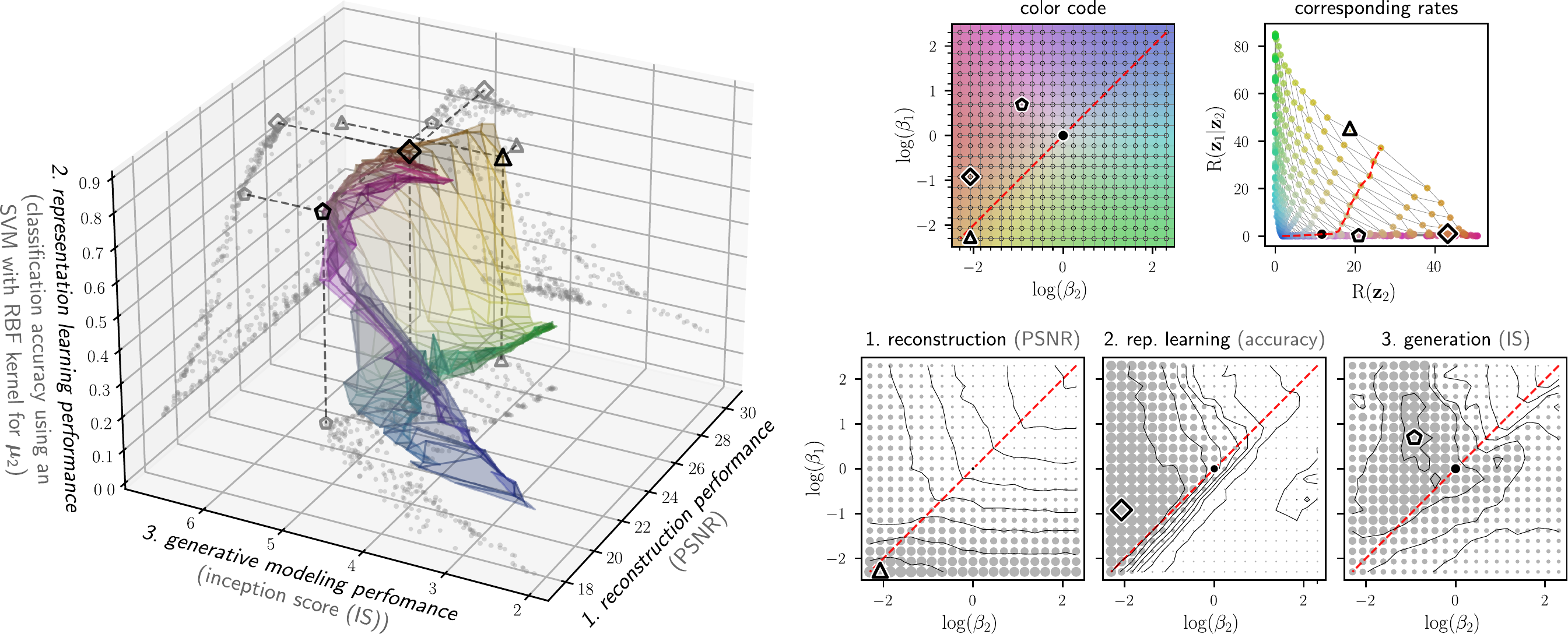}
  \caption{%
    Left: trade-off between performance in the three applications domains of VAEs, using \hbox{GHVAE} trained on the SVHN data set (details: Section~\ref{sec:results});
    higher is better for all three metrics;
    gray dots on walls show 2d-projections.
    Right: color code, corresponding layer-wise rates (Eq.~\ref{eq:individual-rates}), and individual performance landscapes (size of dots~$\propto$~performance).
    The hyperparameters $\beta_2$ and~$\beta_1$ allow us to tune the HVAE for best data reconstruction~($\triangle$), best representation learning~($\diamond$), or best generative modeling~($\pentagon$).
    Note that performance landscapes differ strongly across the three applications, and neither a standard VAE ($\beta_2\!=\!\beta_1\!=\!1$; marked ``$\bullet$'' in right panels) nor a conventional $\beta$-VAE ($\beta_2\!=\!\beta_1$; dashed red lines) result in optimal models for any of the three applications.
  }
  \label{fig:three-metrics-3d}
\end{figure}

Our approach is summarized in Figure~\ref{fig:three-metrics-3d}.
The 3d-plot shows empirically measured performance metrics (discussed in detail in Section~\ref{sec:no-on-fits-all}) for the three application domains identified above.
Each point on the colored surface corresponds to different layer-wise rates in an HVAE with two layers of latents.
Crucially, the rates that lead to optimal performance are different for each of the three application domains (see markers $\triangle$, $\pentagon$, and~$\diamond$ in Figure~\ref{fig:three-metrics-3d}), and none of these three optimal models coincide with a conventional $\beta$-VAE (dashed red lines in right panels).
Thus, being able to control each layer's individual rate allows practitioners to train VAEs that target a specific application.

The paper is structured as follows.
Section~\ref{sec:related} summarizes related work.
Section~\ref{sec:infe-hvaes} introduces the proposed information-trading method. %
We then analyze how controlling individual layers' rates can be used to tune HVAEs for specific tasks, i.e., how performance in each of the three application domains identified above depends on the allocation of rates across layers.
This analysis is done theoretically in Section~\ref{sec:theoretical-bounds} and empirically in Section~\ref{sec:results}.
Section~\ref{sec:conclusions} provides concluding remarks.

\section{Related Work}
\label{sec:related}

We group related work into work on model architectures for hierarchical VAEs, and on $\beta$-VAEs.

\paragraph{Model Design for Hierarchical VAEs.}
The original VAE design~\citep{kingma2013auto, rezende2014stochastic} has a single layer of latent variables, but recent works~\citep{vahdat2020nvae, child2020very}, found that increasing the number of stochastic layers in hierarchical VAEs (HVAEs) improves performance. 
HVAEs have various designs for their inference models.
\citet{sonderby2016ladder} introduced Ladder VAE (LVAE) with a top-down inference path rather than the naive bottom-up inference (see Section~\ref{sec:infe-hvaes}),
whereas the Bidirectional-Inference VAE (BIVA)~\citep{maaloe2019biva} uses a combination of top-down and bottom-up.
Our proposed framework applies to a large class of inference models (see \secref{sec:infe-hvaes}) that includes the popular LVAE~\citep{sonderby2016ladder}.

\paragraph{$\beta$-VAEs And Their Information-Theoretical Interpretations.}
\citet{higgins2016beta} introduced an extra hyperparameter~$\beta$ in the objective of VAEs that tunes the strength of the information bottleneck, and they observed that large~$\beta$ leads to a disentangled latent representation.
An information-theoretical interpretation of $\beta$-VAEs was provided in~\citep{alemi2018fixing} by applying the concept of a (variational) bottleneck~\citep{tishby2015deep,alemi2016deep} to autoencoders.
Due to this information-theoretical interetation, $\beta$-VAEs are popular models for data compression~\citep{balle2016end, minnen2018joint, yang2020improving}, where tuning~$\beta$ allows trading off between the bit rate of compressed data and data distortion.
In the present work, we generalize $\beta$-VAEs when applied to HVAEs, and we introduce a framework for tuning the rate of each latent layer individually.

\section{A Hierarchical Information Trading Framework \label{sec:infe-hvaes}}

\begin{figure}[t]
    \centering
    \begin{subfigure}[b]{0.2\linewidth}
        \centering
        \includegraphics[scale=0.38]{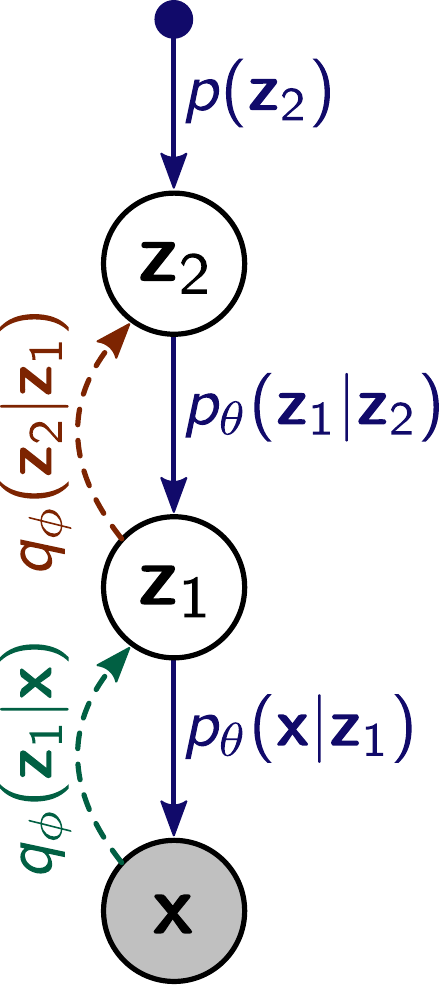}
        \caption{bottom-up}
        \label{fig:bu-hvae}
    \end{subfigure}\hfill
    \begin{subfigure}[b]{0.35\linewidth}
      \centering
      \includegraphics[scale=0.38]{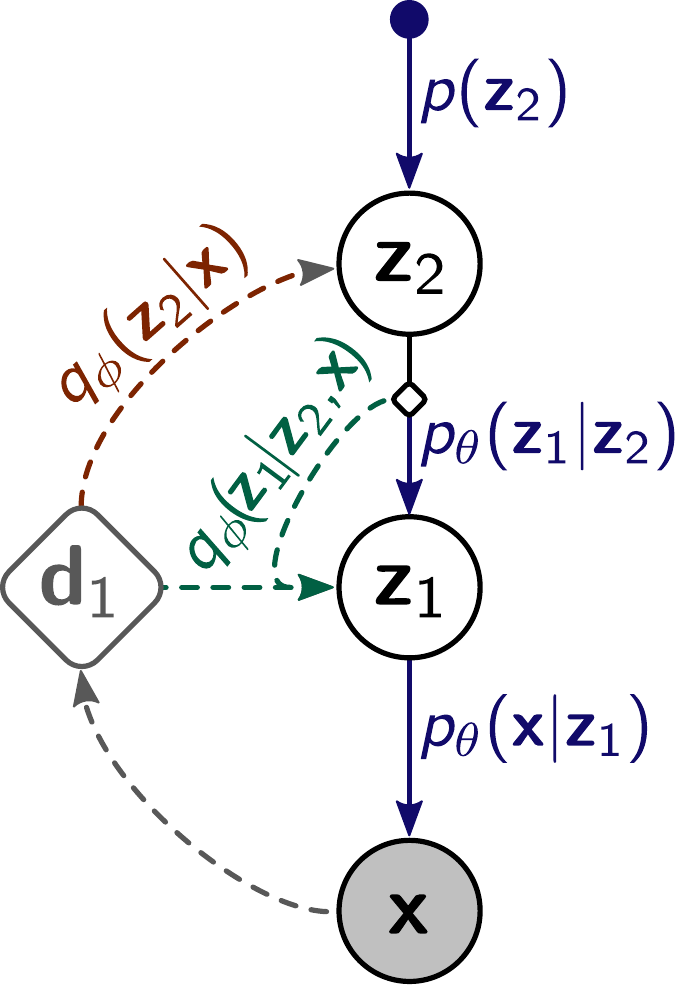}
      \caption{implicit top-down (e.g., LVAE)}
      \label{fig:implicit-top-down}
  \end{subfigure}\hfill
  \begin{subfigure}[b]{0.35\linewidth}
        \centering
        \includegraphics[scale=0.38]{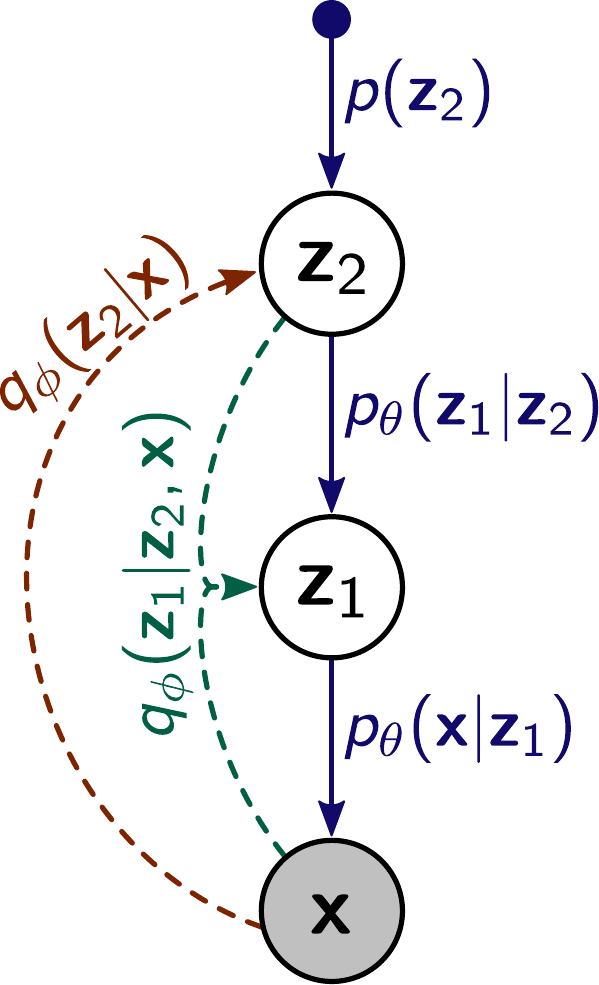}
        \caption{generalized (explicit) top-down}
        \label{fig:explicit-top-down}
    \end{subfigure}
    \caption{Inference (dashed arrows) and generative (solid arrows) models for hierarchical VAEs (HVAEs) with two layers of latent variables.
    White/gray circles denote latent/observed random variables, respectively; the diamond~$\mathbf d_1$ in~\subref{fig:implicit-top-down} is the result of a deterministic transformation of~$\mathbf x$.
    }
    \label{fig:graphical-models}
\end{figure}

We propose a refinement of the rate/distortion theory of $\beta$-VAEs~\citep{alemi2018fixing} that admits controlling individual layers' rates in VAEs with more than one layers of latents (hierarchical VAEs).

\subsection{Conventional $\beta$-VAE With Hierarchical Latent Representations}
\label{sec:conventional-beta-vae}

We consider a hierarchical VAE (HVAE) for data~$\vx$ with $L$~layers of latent representations $\{\vz_\ell\}_{\ell=1}^L$.
Figure~\ref{fig:graphical-models}, discussed further in Section~\ref{sec:trading} below, illustrates various model architectures for the example of~${L=2}$.
Solid arrows depict the generative model $p_\theta(\{\vz_\ell\},\vx)$, where~$\theta$ are model parameters (neural network weights).
We assume that the implementation factorizes $p_\theta(\{\vz_\ell\},\vx)$ as follows,
\begin{align} \label{eq:generative-model}
  p_\theta(\{\vz_\ell\}, \vx) = p_\theta(\vz_L)\, p_\theta(\vz_{L-1}|\vz_L)\, p_\theta(\vz_{L-2}|\vz_{L-1}, \vz_L) \,\cdots\, p_\theta(\vz_1|\vz_{\geq2}) \, p_\theta(\vx|\vz_{\geq1})
\end{align}
where the notation $\vz_{\geq n}$ for any~$n$ is short for the collection of latents $\{\vz_\ell\}_{\ell=n}^L$ (thus, $\vz_{\geq 1}$ and~$\{\vz_\ell\}$ are synonymous), and %
the numbering of latents from~$L$ down to~$1$ follows the common convention in the literature~\citep{sonderby2016ladder, gulrajani2016pixelvae, child2020very}.
The loss function of a normal $\beta$-VAE~\citep{higgins2016beta} with this generic architecture would be
\begin{align}\label{eq:beta-vae}
  \Ell_\beta(\theta,\phi) &= \E_{\vx\sim \sX_\text{train}}\big[
    \underbrace{\E_{q_\phi(\{\vz_\ell\}|\vx)} \big[-\log p_\theta(\vx|\{\vz_\ell\})\big]}_{=\text{ ``distortion'' $D$}}
    + \beta \underbrace{\KL\big[q_\phi(\{\vz_\ell\} \,|\, \vx) \,\big|\!\big|\, p_\theta(\{\vz_\ell\})\big]}_{=\text{ ``rate'' $R$}}
  \big].
\end{align}
Here, $q_\phi(\{\vz_\ell\}\,|\,\vx)$ is the inference (or ``encoder'') model with parameteres~$\phi$, $\sX_\text{train}$ is the training set, $\KL[\,\cdot\, |\!\!\!\;| \,\cdot\,]$ denotes Kullback-Leibler divergence, and the Lagrange parameter $\beta>0$ trades off between a (total) rate~$R$ and a distortion~$D$~\citep{alemi2018fixing}.
Setting $\beta=1$ turns Eq.~\ref{eq:beta-vae} into the negative ELBO objective of a regular VAE~\citep{kingma2013auto}.
The rate~$R$ obtains its name as it measures the (total) information content that $q_\phi$ encodes into the latent representations~$\{\vz_\ell\}$, which would manifest itself in the expected bit rate when one optimally encodes a random draw $\{\vz_\ell\}\sim q_\phi(\{\vz_\ell\}\,|\,\vx)$ using $p_\theta(\{\vz_\ell\})$ as an entropy model~\citep{agustsson2020universally, bennett2002entanglement}.
An important observation pointed out in~\citep{alemi2016deep} is that, regardless how rate~$R$ is traded off against distortion~$D$ by tuning~$\beta$, their sum $R+D$ is---in expectation under any data distribution $p_\text{data}(\vx)$---always lower bounded by the entropy~$H[p_\text{data}(\vx)] := \E_{p_\text{data}(\vx)}[-\log p_\text{data}(\vx)]$,
\begin{align}\label{eq:r-d-h}
  \mathbb E_{p_\text{data}(\vx)}[R + D] \geq H[p_\text{data}(\vx)]
  \qquad\forall\, p_\text{data}.
\end{align}

\paragraph{Limitations.}
The rate~$R$ in Eq.~\ref{eq:beta-vae} is a property of the \emph{collection} $\{\vz_\ell\}$ of all latents, which can limit its interpretability for some inference models. %
For example, the common convention of enumerating layers~$\vz_\ell$ from $\ell=L$ down to~$1$ in Eq.~\ref{eq:generative-model} is reminiscent of a naive architecture for the inference model that factorizes in reverse order compared to Eq.~\ref{eq:generative-model} (``bottom up'', see dashed arrows in Figure~\ref{fig:bu-hvae}), i.e., $q_\phi(\{\vz_\ell\}\,|\,\vx) = q_\phi(\vz_1|\vx)\,q_\phi(\vz_2|\vz_1) \cdots q_\phi(\vz_L|\vz_{L-1})$.
Using a HVAE with such a ``bottom-up'' inference model to reconstruct some given data point~$\vx$ would map $\vx$ to~$\vz_1$ using $q_\phi(\vz_1|\vx)$ and then map~$\vz_1$ back to the data space using $p_\theta(\vx|\vz_1)$, thus ignoring all latents $\vz_\ell$ with $\ell>1$.
Yet, the rate term in Eq.~\ref{eq:beta-vae} still depends on all latents, including the ones not needed to reconstruct any data (practical VAE-based compression methods using bits-back coding~\citep{frey1997cient} would, however, indeed use $\vz_\ell$ with $\ell>1$ as auxiliary variables for computational efficiency).

\subsection{Trading Information Between Latents}
\label{sec:trading}

Many HVAEs used in the literature allow us to resolve the limitations identified in Section~\ref{sec:conventional-beta-vae}.
For example, the popular LVAE architecture~\citep{sonderby2016ladder}, (Figure~\ref{fig:implicit-top-down}), uses an inference model (dashed arrows) that traverses the latents $\{\vz_\ell\}$ in the same order as the generative model (solid arrows).
We consider the following generalization of this architecture (see Figure~\ref{fig:explicit-top-down}),
\begin{align}\label{eq:explicit-top-down}
  q_\phi(\{\vz_\ell\} \,|\, \vx)
  &= q_\phi(\vz_L|\vx)\, q_\phi(\vz_{L-1} \,| \, \vz_L,\vx)\, q_\phi(\vz_{L-2} \,| \, \vz_{L-1}, \vz_L, \vx)\, \cdots\, q_\phi(\vz_1 \,| \, \vz_{\geq 2},\vx).
\end{align}
Formally, Eq.~\ref{eq:explicit-top-down} is just the product rule of probability theory and therefore holds for arbitrary inference models $q_\phi(\{\vz_\ell\} \,|\, \vx)$.
More practically, however, we make the assumption that the actual implementation of $q_\phi(\{\vz_\ell\} \,|\, \vx)$ follows the structure in Eq.~\ref{eq:explicit-top-down}.
This means that, using the trained model, the most efficient way to map a given data point~$\vx$ to its reconstruction~$\hat\vx$ now involves \emph{all} latents $\vz_\ell$ (either drawing a sample or taking the mode at each step):
\begin{align}\label{eq:generic-round-trip}
  \vx \xrightarrow{\; q_\phi(\vz_L|\vx) \;} \vz_L \xrightarrow{\; q_\phi(\vz_{L-1}|\vz_L,\vx) \;} \vz_{L-1}
  \longrightarrow \cdots \longrightarrow
  \vz_2 \xrightarrow{\; q_\phi(\vz_1|\vz_{\geq 2},\vx) \;} \vz_1 \xrightarrow{\; p_\theta(\vx|\{\vz_\ell\}) \;} \hat{\vx}.
\end{align}

\paragraph{Layer-wise Rates.}
We can interpret Eq.~\ref{eq:generic-round-trip} in that it first maps~$\vx$ to a ``crude'' representation~$\vz_L$, which gets iteratively refined to~$\vz_1$, and finally to a reconstruction~$\hat\vx$.
Note that each factor $q_\phi(\vz_\ell \,|\, \vz_{\geq\ell+1}, \vx)$ of the inference model in Eq.~\ref{eq:explicit-top-down} is conditioned not only on the previous layers~$\vz_{\geq\ell+1}$ but also on the original data~$\vx$.
This allows the inference model to target each refinement step in Eq.~\ref{eq:generic-round-trip} such that the reconstruction~$\hat\vx$ becomes close to~$\vx$.
More formally, we chose the inference architecture in Eq.~\ref{eq:explicit-top-down} such that it factorizes over $\{\vz_\ell\}$ in the same order as the generative model (Eq.~\ref{eq:generative-model}).
This allows us to split the total rate~$R$ into a sum of layer-wise rates as follows,
\begin{align}\label{eq:rate-split}
  \begin{split}
    R &= \E_{q_\phi(\{\vz_\ell\}|\vx)}\left[
    \log\frac{q_\phi(\vz_L|\vx)}{p_\theta(\vz_L)}
    +\log\frac{q_\phi(\vz_{L-1}|\vz_L,\vx)}{p_\theta(\vz_{L-1}|\vz_L)}
    +\ldots
    +\log\frac{q_\phi(\vz_1|\vz_{\geq2},\vx)}{p_\theta(\vz_1|\vz_{\geq2})}
  \right] \\
  &= R(\vz_L) + R(\vz_{L-1}|\vz_L) + R(\vz_{L-2}\,|\,\vz_{L-1},\vz_L) + \,\ldots\, + R(\vz_1|\vz_{\geq2}).
  \end{split}
\end{align}
Here,
\begin{align}\label{eq:individual-rates}
  \begin{split}
  R(\vz_L) &= \KL\big[q_\phi(\vz_L|\vx) \,\big|\!\big|\, p_\theta(\vz_L)\big]
  \qquad\text{and}\\
  R(\vz_\ell|\vz_{\geq\ell+1}) &= \E_{q(\vz_{\geq\ell+1}|\vx)}\big[
    \KL\big[q_\phi(\vz_\ell\,|\,\vz_{\geq\ell+1},\vx) \,\big|\!\big|\, p_\theta(\vz_\ell\,|\,\vz_{\geq\ell+1})\big]
    \big]
  \end{split}
\end{align}
quantify the information content of the highest-order latent representation~$\vz_L$ and the (expected) \emph{increase} in information content in each refinement step $\vz_{\ell+1} \to \vz_{\ell}$ in Eq.~\ref{eq:generic-round-trip}, respectively.

\paragraph{Controlling Each Layer's Rate.}
Using Eqs.~\ref{eq:rate-split}-\ref{eq:individual-rates}, we generalize the rate/distortion trade-off from Section~\ref{sec:conventional-beta-vae} by introducing $L$~individual Lagrange multipliers $\beta_L$, $\beta_{L-1}$, \ldots, $\beta_1$, collectively denoted as boldface~$\boldsymbol\beta$.
This leads to a new loss function that generalizes Eq.~\ref{eq:beta-vae} as follows,
\begin{align}\label{eq:beta-L-vae}
  \Ell_{\boldsymbol\beta}(\theta,\phi) &= \E_{\vx\sim \sX_\text{train}}\big[
    D
    + \beta_L R(\vz_L)
    + \beta_{L-1} R(\vz_{L-1} | \vz_L)
    + \ldots
    + \beta_1 R(\vz_1 | \vz_{\geq 2})
  \big].
\end{align}
Setting all $\beta$s to the same value recovers the conventional $\beta$-VAE (Eq.~\ref{eq:beta-vae}), which trades off distortion against \emph{total} information content in $\{\vz_\ell\}$.
Tuning each $\beta$-hyperparameter individually allows trading off information content across latents.
(In a very deep HVAE (i.e., large~$L$) it may be more practical to group layers into only few bins and to use the same $\beta$-value for all layers within a bin.)
We analyze how to tune $\beta$s for various applications theoretically in Section~\ref{sec:theoretical-bounds} and empirically in Section~\ref{sec:results}.

\section{Information-Theoretical Performance Bounds for HVAEs}
\label{sec:theoretical-bounds}

In this section, we analyze theoretically how various performance metrics for HVAEs are restricted by the individual layers' rates $R(\vz_L)$ and $R(\vz_\ell|\vz_{\geq\ell+1})$ identified in Eq.~\ref{eq:individual-rates} for a HVAE with ``top-down'' inference model.
Our analysis motivates the use of the information-trading loss function in Eq.~\ref{eq:beta-L-vae} for training HVAEs, following the argument from the introduction that VAEs are commonly used for a vast variety of tasks.
As we show, different tasks require different trade-offs that can be targeted by tuning the Lagrange multipliers~$\boldsymbol\beta$ in Eq.~\ref{eq:beta-L-vae}.
We group tasks into the application domains of (i)~data reconstruction and manipulation, (ii)~representation learning, and (iii)~data generation.

\paragraph{Data Reconstruction and Manipulation.}
The most obvious class of application domains of VAEs includes tasks that combine encoder and decoder to map some data point~$\vx$ to representations~$\{\vz_\ell\}$ and then back to the data space.
The simplest performance metric for such data reconstruction tasks is the expected distortion~$E_{p_\text{data}(\vx)}[D]$, which we can bound by combining Eq.~\ref{eq:r-d-h} with Eqs.~\ref{eq:rate-split}-\ref{eq:individual-rates},
\begin{align}\label{eq:r-r-d-h}
  \mathbb E_{p_\text{data}(\vx)}[D]
  &\geq H[p_\text{data}(\vx)] - \E_{p_\text{data}(\vx)}\big[R(\vz_L) + R(\vz_{L-1}|\vz_L) + \cdots + R(\vz_1|\vz_{\geq2})\big].
\end{align}
Eq.~\ref{eq:r-r-d-h} would suggest that higher rates (i.e., lower~$\beta$'s) are always better for data reconstruction tasks.
However, in many practical tasks (e.g., image upscaling, denoising, or inpainting) the goal is not solely to reconstruct the original data but also to manipulate the latent representations~$\{\vz_\ell\}$ in a meaningful way.
Here, lower rates can lead to more semantically meaningful representation spaces (see, e.g., Section~\ref{sec:results-representation} below).
Controlling how rate is distributed across layers via Eq.~\ref{eq:beta-L-vae} may allow practitioners to have a semantically meaningful high-level representation~$\vz_L$ with low rate $R(\vz_L)$ while still retaining a high \emph{total} rate~$R$, thus allowing for low distortion~$D$ without violating Eq.~\ref{eq:r-r-d-h}.

\paragraph{Representation Learning.}
In many practical applications,
VAEs are used as nonlinear dimensionality reduction methods to prepare some complicated high-dimensional data~$\vx$ for downstream tasks such as classification, regression, visualization, clustering, or anomaly detection.
We consider a classifier $p_\text{cls.}(y|\vz_\ell)$ operating on the latents~$\vz_\ell$ at some level~$\ell$.
We assume that the (unknown) true data generative process $p_\text{data}(y,\vx) = p_\text{data}(y)\, p_\text{data}(\vx|y)$ generates data~$\vx$ conditioned on some true label~$y$,
thus defining a Markov chain
$
  y \xrightarrow{p_\text{data}} \vx
  \xrightarrow{q_\phi} \vz_\ell
  \xrightarrow{p_\text{cls.}} \hat y
$
where $\hat y := \arg\max_y p_\text{cls.}(y|\vz_\ell)$.
Classification accuracy is bounded~\citep{meyen2016relation} by a function of the mutual information $I_q(y;\vz_\ell)$,
\begin{align}\label{eq:mi-r}
  I_q(y;\vz_\ell)
  \leq I_q(\vx;\vz_\ell)
  &\equiv \E_{p_\text{data}(\vx)}\left[ \E_{q_\phi(\vz_\ell|\vx)}\left[
    \log\frac{q_\phi(\vz_\ell|\vx)}{q_\phi(\vz_\ell)}
  \right] \right] \\
  &= \E_{p_\text{data}(\vx)}\left[ \E_{q_\phi(\vz_\ell|\vx)}\left[
    \log\frac{q_\phi(\vz_\ell|\vx)}{p_\theta(\vz_\ell)}
  \right] \right]
  - \KL\big[ q_\phi(\vz_\ell) \,\big|\!\big|\, p_\theta(\vz_\ell) \big] \nonumber\\
  &\leq \E_{p_\text{data}(\vx)}\bigg[ \E_{q_\phi(\vz_{\geq\ell}|\vx)}\left[
    \log\frac{q_\phi(\vz_{\geq\ell}|\vx)}{p_\theta(\vz_{\geq\ell})}
  \right] \nonumber\\
  &\qquad\qquad\quad -\E_{q_\phi(\vz_\ell|\vx)}\Big[\KL\big[ q_\phi(\vz_{\geq\ell+1} \,|\, \vx,\vz_\ell) \,\big|\!\big|\, p_\theta(\vz_{\geq\ell+1}|\vz_\ell) \big] \Big] \bigg] \nonumber\\
  &\leq \E_{p_\text{data}(\vx)}\big[ \underbrace{R(\vz_L) + R(\vz_{L-1}|\vz_L) + \ldots + R(\vz_\ell\,|\,\vz_{\geq\ell+1})}_{=:R(\vz_{\geq\ell}) \;(\leq R)} \big]. \nonumber
\end{align}
Here, $q_\phi(\vz_\ell) := \E_{p_\text{data}(\vx)}[q_\phi(\vz_\ell|\vx)]$ and we identify $R(\vz_{\geq\ell})$ as the rate accumulated in all layers from~$\vz_L$ to~$\vz_\ell$.
The first inequality in Eq.~\ref{eq:mi-r} comes from the data processing inequality~\citep{mackay2003information}, and the other two inequalities result from discarding the (nonnegative) KL-terms.
The classification accuracy is thus bounded by~\citep{meyen2016relation} (see also proof in \appref{app:proof-acc-bound})
\begin{align}
\label{eq:acc_bound}
  \text{class.~accuracy} \leq f^{-1}\big(I_q(y;\vz_\ell)\big)
  \leq f^{-1}\big(\E_{p_\text{data}(\vx)}[ R(\vz_{\geq\ell})] \big)
  \quad\big(\!\leq f^{-1}\big(\E_{p_\text{data}(\vx)}[R] \big) \big)
\end{align}
where $f^{-1}$ is the inverse of the monotonic function $f(\alpha) = H[p_\text{data}(y)] + \alpha\log\alpha + (1-\alpha)\log\frac{1-\alpha}{M-1}$ with $M$~being the number of classes and $H[p_\text{data}(y)]\leq \log M$ the marginal label entropy.
Eq.~\ref{eq:acc_bound} suggests that the accuracy of an optimal classifier on~$\vz_\ell$ would increase as the rate $R(\vz_{\geq\ell})$ accumulated from~$\vz_L$ to~$\vz_\ell$ grows (i.e., as $\beta_{\geq\ell}\to0$), and that the rate added in downstream layers~$\vz_{<\ell}$ would be irrelevant.
Practical classifiers, however, have a limited expressiveness, which a very high rate $R(\vz_{\geq\ell})$ might exceed by encoding too many details into~$\vz_\ell$ that are not necessary for classification.
We observe in Section~\ref{sec:results-representation} that, in such cases, increasing the rates of \emph{downstream} layers $\vz_{<\ell}$ improves classification accuracy as it allows keeping~$\vz_\ell$ simpler by deferring details to~$\vz_{<\ell}$.

\paragraph{Data Generation.}
The original proposal of VAEs~\citep{kingma2013auto} motivated them from a generative modeling perspective using that, for $\beta=1$, the negative of the loss function in Eq.~\ref{eq:beta-vae} is a lower bound on the log marginal data likelihood.
This suggests setting all $\beta$-hyperparameters in Eq.~\ref{eq:beta-L-vae} to values close to~$1$ if a HVAE is used primarily for its generative model~$p_\theta$.

In summary, our theoretical analysis suggests that optimally tuned layer-wise rates depend on whether a HVAE is used for data reconstruction, representation learning, or data generation.
The next section tests our theoretical predictions empirically for the same three application domains.

\section{Experiments}
\label{sec:results}

To demonstrate the features of our hierarchical information trading framework, we run large-scale grid searches over a two-dimensional rate space using two different implementations of HVAEs and three different data sets.
Although the proposed framework is applicable for HVAEs with $L \geq 2$, we only use HVAEs with $L=2$ in our experiments for simplicity and visualization purpose.

\subsection{Experimental setup}

\paragraph{Data sets.}
We used the SVHN~\citep{netzer2011reading} and CIFAR-10~\citep{krizhevsky2009learning} data sets (both $32\times 32$ pixel color images), and MNIST~\citep{lecun1998gradient} ($28\times 28$ binary pixel images).
SVHN consists of photographed house numbers from 0 to~9, which are geometrically simpler than the 10~classes of objects from CIFAR-10 but more complex than MNIST digits.
Most results shown in the main paper use SVHN;
comprehensive results for CIFAR-10 and MNIST are shown in \appref{app:results} and tell a similar story except where explicitly discussed.

\paragraph{Model Architectures.} 
For the generative model (Eq.~\ref{eq:generative-model}), we assume a (fixed) standard Gaussian prior $p(\vz_2) = \mathcal{N}(\mathbf{0}, \mathbf{I})$, and we use diagonal Gaussian models for $p_\theta(\vz_1|\vz_2) = \mathcal{N}(g_\mu (\vz_2), g_\sigma (\vz_2)^2)$ and (for SVHN and CIFAR-10) $p_\theta(\vx|\vz_1) = \mathcal{N}(g_{\mu'} (\vz_1), \sigma_{\bm{x}}^2 \mathbf{I})$ (this is similar to, e.g.,~\citep{minnen2018joint}).
Here, $g_\mu$, $g_\sigma$, and $g_{\mu'}$, denote neural networks (see details below).
Since MNIST has binary pixel values, we model it with a Bernoulli distribution for $p_\theta(\vx|\vz_1) = \operatorname{Bern}(g_{\mu'} (\vz_1))$.
For the inference model, we also use diagonal Gaussian models for $q_\phi(\vz_2|\vx) = \mathcal{N}(f_\mu(\vx), f_\sigma(\vx)^2)$ and for $q_\phi(\vz_1|\vx,\vz_2) = \mathcal{N}(f_{\mu'}(\vx,\vz_2), f_{\sigma'}(\vx,\vz_2)^2)$, where $f_\mu$, $f_\sigma$, $f_{\mu'}$, and $f_{\sigma'}$ are again neural networks.

We examine both LVAE (\figref{fig:implicit-top-down}) and our generalized top-down HVAEs (GHVAEs; see \figref{fig:explicit-top-down}), using simple network architectures with only 2 to~3 convolutional and 1 fully connected layers (see \appref{app:implementation} for details) so that we can scan a large rate-space efficiently.
Note that we are not trying to find the new state-of-the-art HVAEs.
Results for LVAE are in \appref{app:lvae-svhn}.

We trained~441 different HVAEs for each data set/model combination, scanning the rate-hyperparameters $(\beta_2, \beta_1)$ over a $21 \times 21$ grid ranging from $0.1$ to~$10$ on a log scale in both directions (see \figref{fig:three-metrics-3d} on page~\pageref{fig:three-metrics-3d}, right panels).
Each model took about 2~hours to train on an RTX-2080Ti GPU ($\sim\!27$~hours in total for each data set/model combination using 32~GPUs in parallel).

\paragraph{Baselines.}
Our proposed framework (Eq.~\ref{eq:beta-L-vae}) generalizes over both VAEs and $\beta$-VAEs (Eq.~\ref{eq:beta-vae}), which we obtain in the cases $\beta_2 = \beta_1 = 1$ and $\beta_2 = \beta_1$, respectively.
These baselines are indicated as black ``\tikzcircle[black, fill=white]{2pt}'' and red ``\tikzcircle[red, fill=white]{2pt}'' circles, respectively, in Figures~\ref{fig:psnr:2d_3d}, \ref{fig:is:svhn_2d}, \ref{fig:acc_bound:svhn_rbf_rate}, and~\ref{fig:accs:svhn_cifar_2d}, discussed below.

\paragraph{Metrics.}
Performance metrics for the three application domains of VAEs mentioned in the introduction are introduced at the beginnings of the corresponding Sections~\ref{sec:results-reconstruction}-\ref{sec:results-representation}.
In addition, we evaluate the individual rates $R(\mathbf z_2)$ and $R(\mathbf z_1|\mathbf z_2)$ (Eq.~\ref{eq:individual-rates}), which we report in \textit{nats} (i.e., to base~$e$).

\subsection{There is no ``One HVAE Fits All''}
\label{sec:no-on-fits-all}

\figref{fig:three-metrics-3d} on page~\pageref{fig:three-metrics-3d} summarizes our results.
The ${21\!\times\! 21}$ GHVAEs trained with the grid of hyperparameters $\beta_2$ and $\beta_1$ map out a surface in a 3d-space spanned by suitable metrics for the three application domains (metrics defined in Sections~\ref{sec:results-reconstruction}-\ref{sec:results-representation} below).
The two upper right panels map colors on this surface to $\beta$s used for training and to the resulting layer-wise rates, respectively.
The lower right panels show performance landscapes and identify the optimal models for the three application domains of data reconstruction~($\triangle$), representation learning~($\diamond$), and generative modeling~($\pentagon$).

The figure shows that moving away from a conventional $\beta$-VAE ($\beta_2\!=\!\beta_1$; dashed red lines in \figref{fig:three-metrics-3d}) allows us to find better models for a given application domain as the three application domains favor vastly different regions in $\beta$-space.
Thus, \emph{there is no single HVAE that is optimal for all tasks}, and a HVAE that has been optimized for one task can perform poorly on a different task.

\subsection{Definition of the Optimal Model for a Given Total Rate}
\label{sec:convex-hull}

One of the questions we study in Sections~\ref{sec:results-reconstruction}-\ref{sec:results-representation} below is:
``Which allocation of rates across layers results in best model performance \emph{if we keep the total rate~$R$ fixed}''.
Unfortunately, it is difficult to keep~$R$ fixed at training time since we control rates only indirectly via their Lagrange multipliers $\beta_2$ and~$\beta_1$.
We instead use the following definition, illustrated in \figref{fig:acc_bound:svhn_rbf_rate} for a performance metric introduced in Section~\ref{sec:results-representation} below.
The figure plots the performance metric over~$R$ for all $21\times 21$ $\beta$-settings and highlights with purple circles~``\tikzcircle[purp, fill=white]{2pt}'' all points on the upper convex hull.
These highlighted models are optimal for a small interval of total rates in the following sense:
if we use the total rates~$R$ of all~``\tikzcircle[purp, fill=white]{2pt}'' to partition the horizontal axis into intervals then, by definition of the convex hull, each~``\tikzcircle[purp, fill=white]{2pt}'' represents the model with highest performance in either the interval to its left or the one to its right.

\subsection{Performance on Data Reconstruction}
\label{sec:results-reconstruction}

\begin{figure}[t]
  \centering
  \begin{subfigure}[b]{0.4\textwidth}
    \centering
    \includegraphics[width=\textwidth]{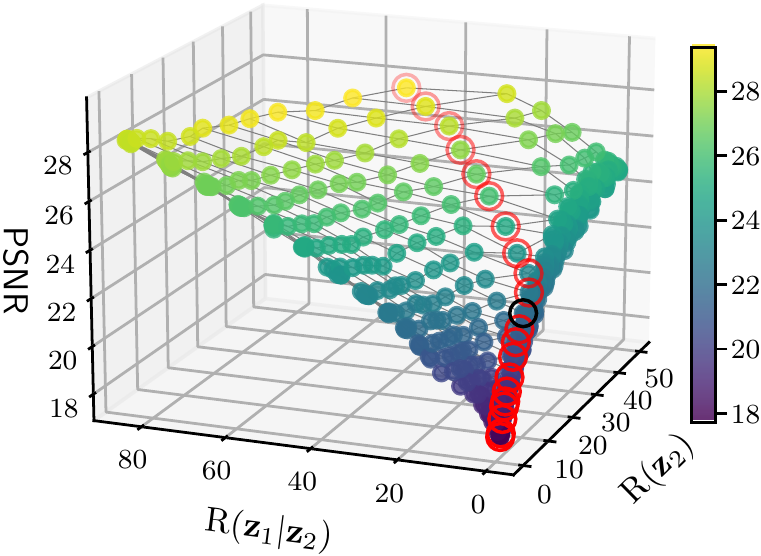}
    \subcaption{%
        Rate/rate/distortion surface for SVHN.
    }
    \label{fig:psnr:svhn_3d}
  \end{subfigure}
  \hfill
  \begin{subfigure}[b]{0.55\textwidth}
    \centering
    \includegraphics[width=\textwidth]{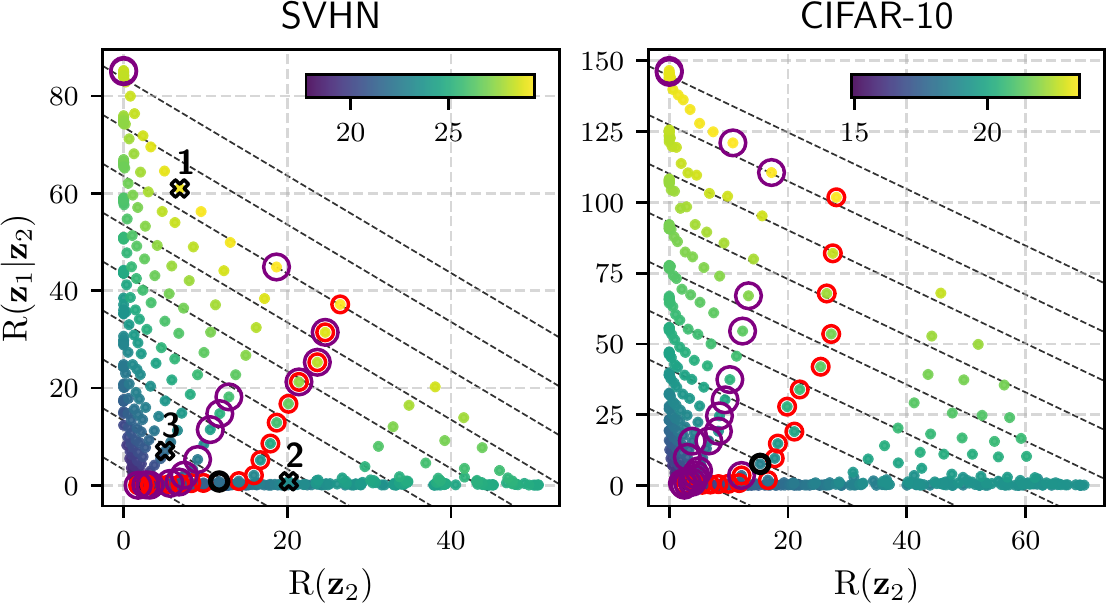}
    \subcaption{%
        PSNR-rates comparison in 2d.
    }
    \label{fig:psnr:svhn_cifar_2d}
  \end{subfigure}
\caption{
PSNR-rate trade-off for GHVAEs trained on SVHN and CIFAR-10.
Figure~\subref{fig:psnr:svhn_3d} visualizes the same data as the left panel of \subref{fig:psnr:svhn_cifar_2d} in~3d.
Black circles~``\tikzcircle[black, fill=white]{2pt}'' mark standard VAEs ($\beta_2\!=\!\beta_1=1$), red circles ``\tikzcircle[red, fill=white]{2pt}'' mark $\beta$-VAEs ($\beta_2=\beta_1$), and
purple circles~``\tikzcircle[purp, fill=white]{2pt}'' mark optimal models along constant total rate (dashed diagonal lines) as defined in \secref{sec:convex-hull}.
Crosses point to columns in \figref{fig:gen_recon_samples}. 
}
\label{fig:psnr:2d_3d}
\end{figure}

Reconstruction is a popular task for VAEs, e.g., in the area of lossy compression \citep{balle2016end}.
We measure reconstruction quality using the common peak signal-to-noise ratio (PSNR), which is equal to $\E_{\vx\sim \sX_\text{test}}[-\log D]$ up to rescaling and shifting.
Higher PSNR means better reconstruction.

\figref{fig:psnr:svhn_3d} shows a 3d-plot of PSNR as a function of both $R(\vz_1|\vz_2)$ and $R(\vz_2)$ for SVHN, thus generalizing the rate/distortion curve of a conventional $\beta$-VAE to a rate/rate/distortion surface.
\figref{fig:psnr:svhn_cifar_2d} introduces a more compact 2d-representation of the same data that we use for all remaining metrics in the rest of this section and in \appref{app:results}, and it also shows results for CIFAR-10.

Unsurprisingly and consistent with Eq.~\ref{eq:r-r-d-h}, reconstruction performance improves as total rate grows.
However, minimizing distortion without any constraints is not useful in practice as we can simply use the original data, which has no distortion.
To simulate a practical constraint in, e.g., a data-compression application, we consider models with optimal PSNR \emph{for a given total rate~$R$} (as defined in \secref{sec:convex-hull}) which are marked as purple circles~``\tikzcircle[purp, fill=white]{2pt}'' in \figref{fig:psnr:svhn_cifar_2d}.
We see for both SVHN and CIFAR-10 that conventional $\beta$-VAEs ($\beta_2\!=\!\beta_1$; red circles) perform somewhat suboptimal for a given total rate and can be improved by trading some rate in~$\vz_2$ for some rate in~$\vz_1$. %
Reconstruction examples for the three models marked with crosses in \figref{fig:psnr:svhn_cifar_2d} are shown in \figref{fig:gen_recon_samples}~(bottom).
Visual reconstruction quality improves from ``3'' to ``2'' to~``1'', consistent with reported PSNRs.

\begin{figure}[t]
  \centering
  \includegraphics[width=\textwidth]{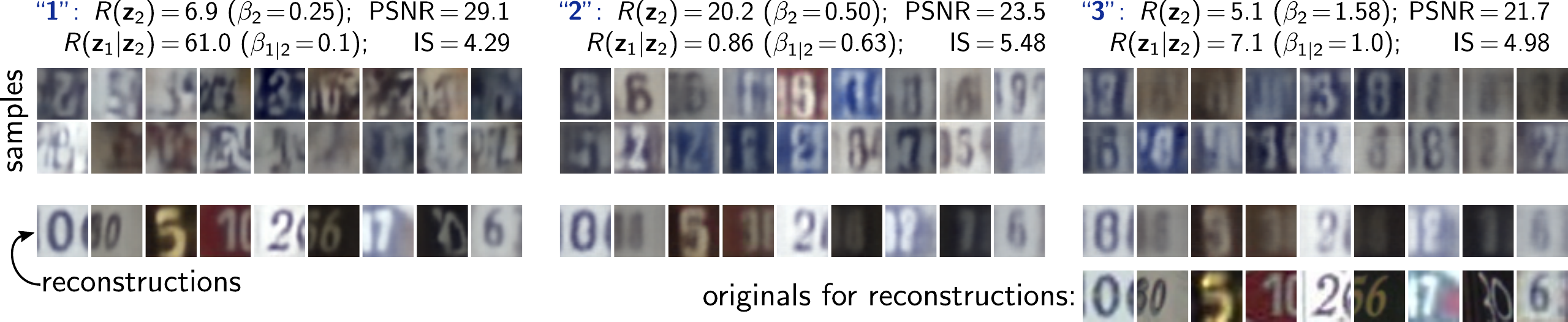}
  \caption{
    Samples (top) and reconstructions (bottom) from 3 different models (blue column labels ``1'', ``2'', and ``3'' from left to right correspond to crosses ``1'', ``2'', and ``3'' in Figures~\ref{fig:psnr:svhn_cifar_2d} \&~\ref{fig:is:svhn_2d}).
    Consistent with PSNR and IS metrics, model ``1'' produces poorest samples but best reconstructions.
  }
  \label{fig:gen_recon_samples}
\end{figure}

\subsection{Performance on Sample Generation}
\label{sec:results-generation}

\begin{figure}[t]
  \centering
  \begin{minipage}[t]{.68\textwidth}
    \centering
    \includegraphics[height=37mm]{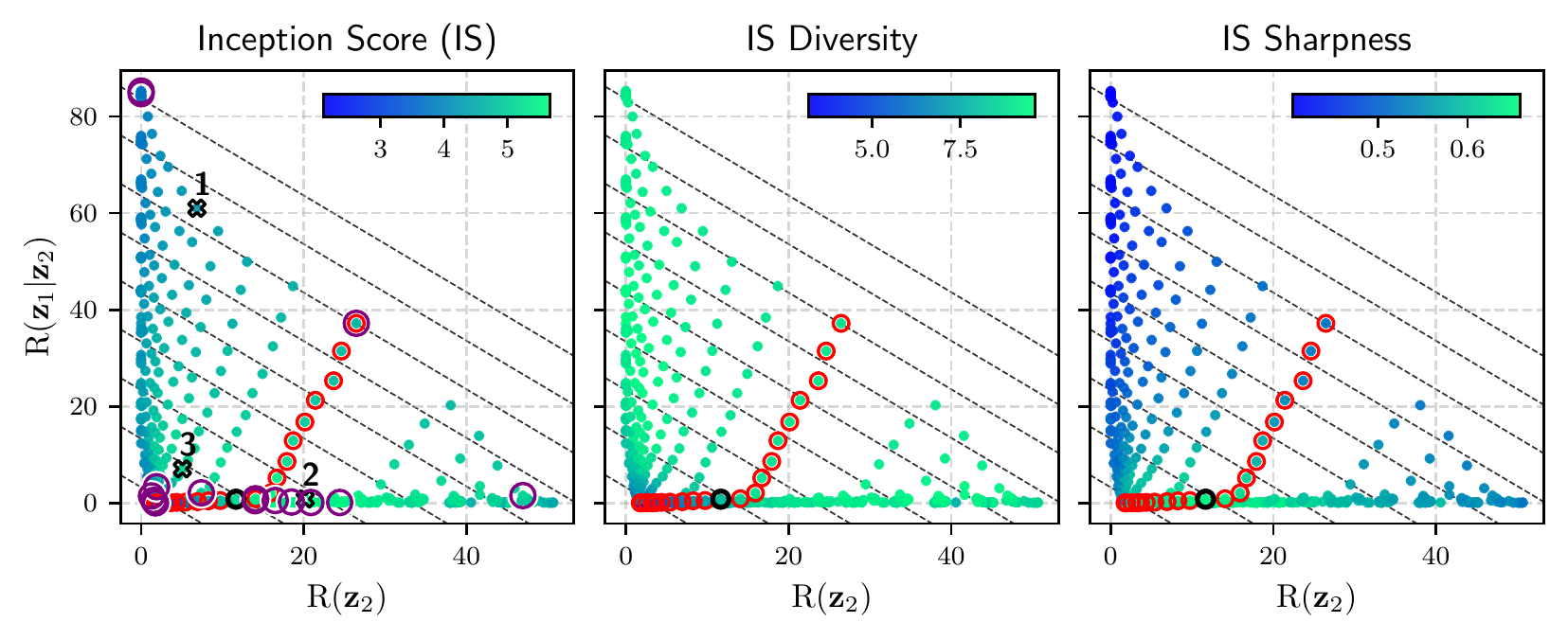}
    \caption{%
      Sample generation performance, measured in Inception Score~(IS, see Eq.~\ref{eq:is}) and its factorization into diversity and sharpness as a function of layer-wise rates for GHVAEs trained using SVHN data.
      Crosses in left panel correspond to samples shown in \figref{fig:gen_recon_samples}.
      Markers ``\tikzcircle[black, fill=white]{2pt}'', ``\tikzcircle[red, fill=white]{2pt}'', and ``\tikzcircle[purp, fill=white]{2pt}'' same as in \figref{fig:psnr:2d_3d}.
    }
    \label{fig:is:svhn_2d}
  \end{minipage}\hfill
  \begin{minipage}[t]{.28\textwidth}
    \centering
    \includegraphics[height=37mm]{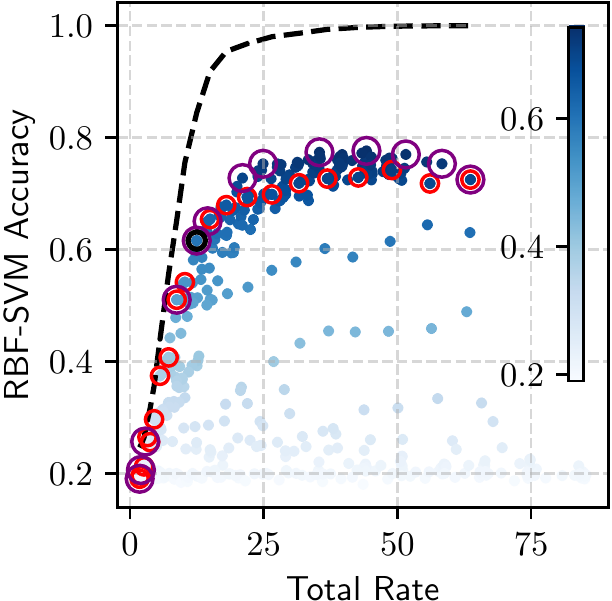} %
    \caption{%
      RBF-SVM classification accuracies on $\bm{\mu}_2$. %
      Dashed line shows theoretical bound (\eqref{eq:acc_bound}).
      Other markers as in \figref{fig:psnr:2d_3d}.
    }
    \label{fig:acc_bound:svhn_rbf_rate}
  \end{minipage}
\end{figure}

We next evaluate how tuning layer-wise rates affects the quality of samples from the generative model.
We measure sample quality by the widely used Inception Score~(IS)~\citep{salimans2016improved},
\beq\label{eq:is}
    \text{IS} = \exp{ \big\{ \E_{ p_\theta(\vx)} \big[ \KL[p_\text{cls.}(y|\bm{x}) \,|\!|\, p_\text{cls.}(y) ] \big] \big\}}
    = e^{ H[p_\text{cls.}(y)] } \times e^{ - \E_{p_\theta(\vx)}[ H[p_\text{cls.}(y|\vx)]] }
\eeq
Here, $p_\theta$ is the trained generative model (Eq.~\ref{eq:generative-model}), $p_\text{cls.}(y|\bm{x})$ is the predictive distribution of a classifier trained on the same training set, and $p_\text{cls.}(y) := \E_{ p_\theta(\vx)}[p_\text{cls.}(y|\vx)]$.
The second equality in Eq.~\ref{eq:is} follows~\citet{barratt2018note} to split~IS into a product of a diversity score
and a sharpness score.
Higher is better for all scores.
The classifier
is a ResNet-18~\citep{he2016deep} for SVHN (test accuracy~$95.02\,\%$) and a DenseNet-121~\citep{huang2017densely} for CIFAR-10 (test accuracy~$94.34\,\%$).

\figref{fig:is:svhn_2d}~(left) shows IS for GHVAEs trained on SVHN.
Unlike the results for PSNR, here, higher rate does not always lead to better sample quality: for very high $R(\mathbf z_2)$ and low $R(\mathbf z_1|\mathbf z_2)$, IS eventually drops.
The region of high IS
is in the area where $\beta_2 < \beta_1$, i.e., where $R(\mathbf z_2)$ is higher than in a comparable conventional $\beta$-VAE.
The center and right panels of \figref{fig:is:svhn_2d} show diversity and sharpness, indicating that IS is mainly driven here by sharpness, which depends mostly on $R(\mathbf z_2)$,
possibly because~$\vz_2$ captures higher-level concepts than~$\vz_1$ that may be more important to the classifier in Eq.~\ref{eq:is}.
Samples from the the three models marked with crosses in \figref{fig:is:svhn_2d} are shown in \figref{fig:gen_recon_samples}~(top).
Visual sample quality improves from ``1'' to ``3'' to~``2'', consistent with reported IS.

\subsection{Performance on Representation Learning for Downstream Classification}
\label{sec:results-representation}

\begin{figure}[t]
  \centering
  \includegraphics[width=1.\textwidth]{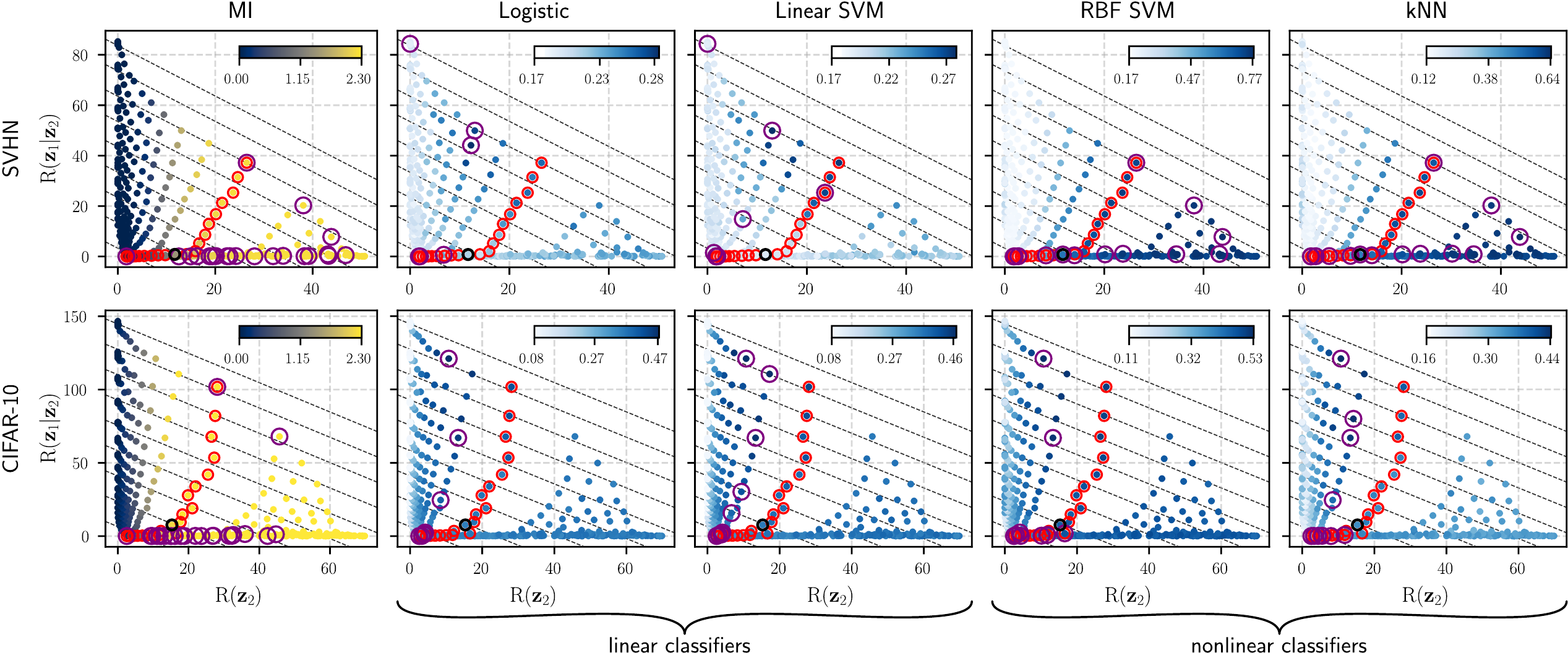}
  \caption{%
    Mutual information (MI) $I_q(y;\vz_2)$ and classification accuracies of four classifiers (see column labels) as a function of layer-wise rates $R(\vz_2)$ and $R(\vz_1|\vz_2)$.
    Classifiers are conditioned on $\bm{\mu}_2:=\arg\max_{\vz_2} q(\vz_2|\vx)$ learned from GHVAEs trained with SVHN (top) and CIFAR-10 (bottom).
    Markers ``\tikzcircle[black, fill=white]{2pt}'', ``\tikzcircle[red, fill=white]{2pt}'', and ``\tikzcircle[purp, fill=white]{2pt}'' same as in \figref{fig:psnr:2d_3d}.
  }
  \label{fig:accs:svhn_cifar_2d}
\end{figure}

VAEs are very popular for representation learning as they map complicated high dimensional data~$\vx$ to typically lower dimensional representations $\{\vz_\ell\}$.
To measure the quality of learned representations, we train two sets of classifiers on a labeled test set
for each trained HVAE, each consisting of: logistic regression, a Support Vector Machine (SVM)~\citep{boser1992training} with linear kernel, an SVM with RBF kernel, and $k$-nearest neighbors (kNN) with $k=5$.
One set of classifiers is conditioned on the mode~$\bm\mu_2$ of $q_\phi(\vz_2|\vx)$ and the other one on the mode~$\bm\mu_1$ of $q_\phi(\vz_1|\vz_2,\vx)$, where $\vz_2\sim q_\phi(\vz_2|\vx)$.
We use the implementations from scikit-learn~\citep{scikit-learn} for all classifiers.

\begin{wraptable}{r}{75mm}
\centering\vspace{-2pt}
\caption{Optimal classification accuracies (across all $(\beta_2, \beta_1)$-settings) using either $\bm{\mu}_2$ or~$\bm{\mu}_1$.}
\label{tab:accs}
\small
\begin{tabular}{@{\;}l@{\;\;}c@{\;\,}c@{\;\,}c@{\;\,}c@{\;}}
\toprule
Data Set                & log.~reg.   & lin.~SVM      & RBF SVM         & kNN             \\\midrule
SVHN ($\bm{\mu}_2$)     & 28.43\,\%         & 27.87\,\%         & \textbf{77.60\,\%}        & \textbf{64.25\,\%}         \\
SVHN ($\bm{\mu}_1$)     & \textbf{45.77\,\%}         & \textbf{49.81\,\%}         & 59.28\,\%         & 56.49\,\%         \\ \midrule
CIFAR-10 ($\bm{\mu}_2$) & \textbf{47.36\,\%}         & \textbf{46.95\,\%}         & \textbf{53.15\,\%}         & \textbf{44.20\,\%}         \\
CIFAR-10 ($\bm{\mu}_1$) & 43.27\,\%         & 42.55\,\%         & 45.60\,\%         & 39.25\,\%         \\ \bottomrule
\end{tabular}
\end{wraptable}
\figref{fig:accs:svhn_cifar_2d} shows the classification accuracies (columns~2-5) for all classifiers trained on~$\bm\mu_2$.
The first column shows the mutual information $I_q(y;\vz_2)$, which depends mainly on $R(\vz_2)$ as expected from Eq.~\ref{eq:mi-r}. %
As long as the classifier is expressive enough (e.g., RBF-SVM or kNN) and the data set is simple (SVHN; top row), higher mutual information ($\approx$~higher $R(\vz_2)$) corresponds to higher classification accuracies, consistent with \eqref{eq:acc_bound}.
But for less expressive (e.g., linear) classifiers or more complex data (CIFAR-10; bottom row), increasing $R(\vz_1|\vz_2)$ improves classification accuracy (see purple circles~``\tikzcircle[purp, fill=white]{2pt}'' in corresponding panels), consistent with the discussion below Eq.~\ref{eq:acc_bound}.
We see a similar effect (Table~\ref{tab:accs}) for most classifier/data set combinations when replacing $\bm\mu_2$ by~$\bm\mu_1$, which has more information about~$\vx$ but is also higher dimensional.

\section{Conclusions}
\label{sec:conclusions}

We classified the various tasks that can be performed with Variational Autoencoders (VAEs) into three application domains and argued that each domain has different trade-offs, such that a good VAE for one domain is not necessarily good for another.
This observation motivated us to propose a refinement of the rate/distortion theory of VAEs that allows trading off rates across individual layers of latents in hierarchical VAEs.
We showed both theoretically and empirically that the proposal indeed provides practitioners better control for tuning VAEs for the three application domains.
In the future, it would be interesting to explore adaptive schedules for the Lagrange parameters~$\boldsymbol\beta$ that would make it possible to target a specific given rate for each layer in a single training run, for example by using the method proposed by \citet{rezende2018taming}.

\newpage
\subsubsection*{Acknowledgments}

The authors would like to thank Johannes Zenn, Zicong Fan, Zhen Liu for their helpful discussion.
Funded by the Deutsche Forschungsgemeinschaft (DFG, German Research Foundation) under Germany’s Excellence Strategy~--~EXC number 2064/1~--~Project number 390727645.
This work was supported by the German Federal Ministry of Education and Research (BMBF): Tübingen AI Center, FKZ:~01IS18039A.
The authors thank the International Max Planck Research School for Intelligent Systems (IMPRS-IS) for supporting Tim Z.~Xiao.

\paragraph{Reproducibility Statement.}
All code necessary to reproduce the results in this paper is available at \url{https://github.com/timxzz/HIT/}.

\bibliography{ref}
\bibliographystyle{iclr2023_conference}

\newpage
\appendix
\section{Experiment Supplementaries}

\subsection{Implementation Details \label{app:implementation}}

\begin{table}[h]
\centering
\caption{Model architecture details for generalized top-down HVAEs (GHVAEs) used in \secref{sec:results}. \emph{Conv} and \emph{ConvTransp} denote the convolutional and transposed convolutional layer, which has the corresponding input: \emph{input channel, output channel, kernel size, stride, padding}. \emph{FC} represents fully connected layer.
}
\resizebox{\textwidth}{!}{

\begin{tabular}{@{}cllll@{}}
\toprule
Data set                                                                  & $q(\vz_2|\vx)$                                                                                                                                                                                                                                                                 & $q(\vz_1|\vz_2, \vx)$                                                                                                                                                                                                                                                                                                                                                                                & $p(\vz_1|\vz2)$                                                                                                                                                                                                              & $p(\vx|\vz_1)$                                                                                                                   \\ \midrule
\multirow{2}{*}{\begin{tabular}[c]{@{}c@{}}SVHN/\\ CIFAR-10\end{tabular}} & \begin{tabular}[c]{@{}l@{}}\textit{Share:}\\ Conv(3, 32, 4, 2, 1),\\ Conv(32, 32, 4, 2, 1),\\ Conv(32, 32, 4, 2, 1)\\ \\ \textit{For mean:}\\ FC(In=512, Out=32)\\ \textit{For variance:}\\ FC(In=512, Out=32)\end{tabular}  & \begin{tabular}[c]{@{}l@{}}\textit{For $\vx$:}\\ Conv(3, 32, 4, 2, 1),\\ Conv(32, 32, 4, 2, 1)\\ \\ \textit{For $\vz_2$:}\\ FC(In=32, Out=512)\\ \\ \textit{Share:}\\ ConvTransp(64, 32, 4, 2, 1)\\ \\ \textit{For mean:}\\ Conv(32, 32, 3, 1, 1)\\ \textit{For variance:}\\ Conv(32, 32, 3, 1, 1)\end{tabular} & \begin{tabular}[c]{@{}l@{}}\textit{Share:}\\ FC(In=32, Out=256)\\ \\ \textit{For mean:}\\ FC(In=256, Out=512)\\ \textit{For variance:}\\ FC(In=256, Out=512)\end{tabular}    & \begin{tabular}[c]{@{}l@{}}ConvTransp(32, 32, 4, 2, 1),\\ ConvTransp(32, 32, 4, 2, 1),\\ ConvTransp(32, 3, 4, 2, 1)\end{tabular} \\ \cmidrule(l){2-5} 
                                                                          & $\vz_1$ dims: 512                                                                                                                                                                                                                                                              & $\vz_2$ dims: 32                                                                                                                                                                                                                                                                                                                                                                                     & $\sigma_\vx = 0.71$                                                                                                                                                                                                          & Total params: 475811                                                                                                             \\ \midrule
\multirow{2}{*}{\begin{tabular}[c]{@{}c@{}}MNIST\\ (Binary)\end{tabular}}                                                    & \begin{tabular}[c]{@{}l@{}}\textit{Share:}\\ Conv(1, 16, 4, 2, 1),\\ Conv(16, 16, 4, 2, 1),\\ Conv(16, 16, 4, 1, 0)\\ \\ \textit{For mean:}\\ FC(In=256, Out=20)\\ \textit{For variance:}\\ FC(In=256, Out=20)\end{tabular} & \begin{tabular}[c]{@{}l@{}}\textit{For $\vx$:}\\ Conv(1, 16, 4, 2, 1),\\ Conv(16, 16, 4, 1, 0)\\ \\ \textit{For $\vz_2$:}\\ FC(In=20, Out=256)\\ \\ \textit{Share:}\\ ConvTransp(32, 16, 4, 1, 0)\\ \\ \textit{For mean:}\\ Conv(16, 16, 3, 1, 1)\\ \textit{For variance:}\\ Conv(16, 16, 3, 1, 1)\end{tabular} & \begin{tabular}[c]{@{}l@{}}\textit{Share:}\\ FC(In=20, Out=128)\\ \\ \textit{For mean:}\\ FC(In=128, Out=256)\\ \textit{For variance:}\\ FC(In=128, Out=256)\end{tabular} & \begin{tabular}[c]{@{}l@{}}ConvTransp(16, 16, 4, 1, 0),\\ ConvTransp(16, 16, 4, 2, 1),\\ ConvTransp(16, 1, 4, 2, 1)\end{tabular} \\ \cmidrule(l){2-5} 
                                                                          & $\vz_1$ dims: 256                                                                                                                                                                                                                                                              & $\vz_2$ dims: 20                                                                                                                                                                                                                                                                                                                                                                                     & $\sigma_\vx$: N/A                                                                                                                                                                                                            & Total params: 122713                                                                                                             \\ \bottomrule
\end{tabular}

}
\end{table}

\begin{table}[h]
\centering
\caption{Model architecture details for LVAEs used in \secref{sec:results}. \emph{Conv} and \emph{ConvTransp} denote the convolutional and transposed convolutional layer, which has the corresponding input: \emph{input channel, output channel, kernel size, stride, padding}. \emph{FC} represents fully connected layer.
}
\resizebox{\textwidth}{!}{

\begin{tabular}{@{}cllll@{}}
\toprule
Data set                                                                  & $q(\vz_2|\vx)$                                                                                                                                                                                                                                                                 & $q(\vz_1|\vz_2, \vx)$                                                                                                                                                                                                                              & $p(\vz_1|\vz2)$                                                                                                                                                                                                              & $p(\vx|\vz_1)$                                                                                                                   \\ \midrule
\multirow{2}{*}{\begin{tabular}[c]{@{}c@{}}SVHN/\\ CIFAR-10\end{tabular}} & \begin{tabular}[c]{@{}l@{}}\textit{Share:}\\ Conv(3, 32, 4, 2, 1),\\ Conv(32, 32, 4, 2, 1),\\ Conv(32, 32, 4, 2, 1)\\ \\ \textit{For mean:}\\ FC(In=512, Out=32)\\ \textit{For variance:}\\ FC(In=512, Out=32)\end{tabular} & \begin{tabular}[c]{@{}l@{}}\textit{Involve $\mathbf d$:}\\ Conv(32, 32, 4, 2, 1)\\ \\ \textit{For mean:}\\ Conv(32, 32, 3, 1, 1)\\ \textit{For variance:}\\ Conv(32, 32, 3, 1, 1)\end{tabular} & \begin{tabular}[c]{@{}l@{}}\textit{Share:}\\ FC(In=32, Out=256)\\ \\ \textit{For mean:}\\ FC(In=256, Out=512)\\ \textit{For variance:}\\ FC(In=256, Out=512)\end{tabular} & \begin{tabular}[c]{@{}l@{}}ConvTransp(32, 32, 4, 2, 1),\\ ConvTransp(32, 32, 4, 2, 1),\\ ConvTransp(32, 3, 4, 2, 1)\end{tabular} \\ \cmidrule(l){2-5} 
                                                                          & $\vz_1$ dims: 512                                                                                                                                                                                                                                                              & $\vz_2$ dims: 32                                                                                                                                                                                                                                   & $\sigma_\vx = 0.71$                                                                                                                                                                                                          & Total params: 408131                                                                                                             \\ \bottomrule
\end{tabular}

}
\end{table}

\newpage
\subsection{Additional Results \label{app:results}}
Here we attached the results for MNIST, as well as the full results for LVAE on SVHN and generalized top-down HVAEs on CIFAR-10.

\subsubsection{Results for generalized top-down HVAEs on MNIST}
We also evaluate our proposed framework using generalized top-down HVAEs trained on binary MNIST data (i.e., black and white images rather than grayscale).

We note that the inception score (IS) behaves different in our MNIST models compared to SVHN (see Figure~\ref{fig:is:svhn_2d}) in that optimal IS in MNIST occurs for high $R(\vz_1|\vz_2)$ rather than high $R(\vz_2)$.
This indicates that semantically low-level properties (hand-writing style) of MNIST might have more variation than high level properties (the digit), whereas SVHN images show variation in additional high-level properties such as the background color.

\begin{figure}[h]
  \centering
  \includegraphics[width=1.\textwidth]{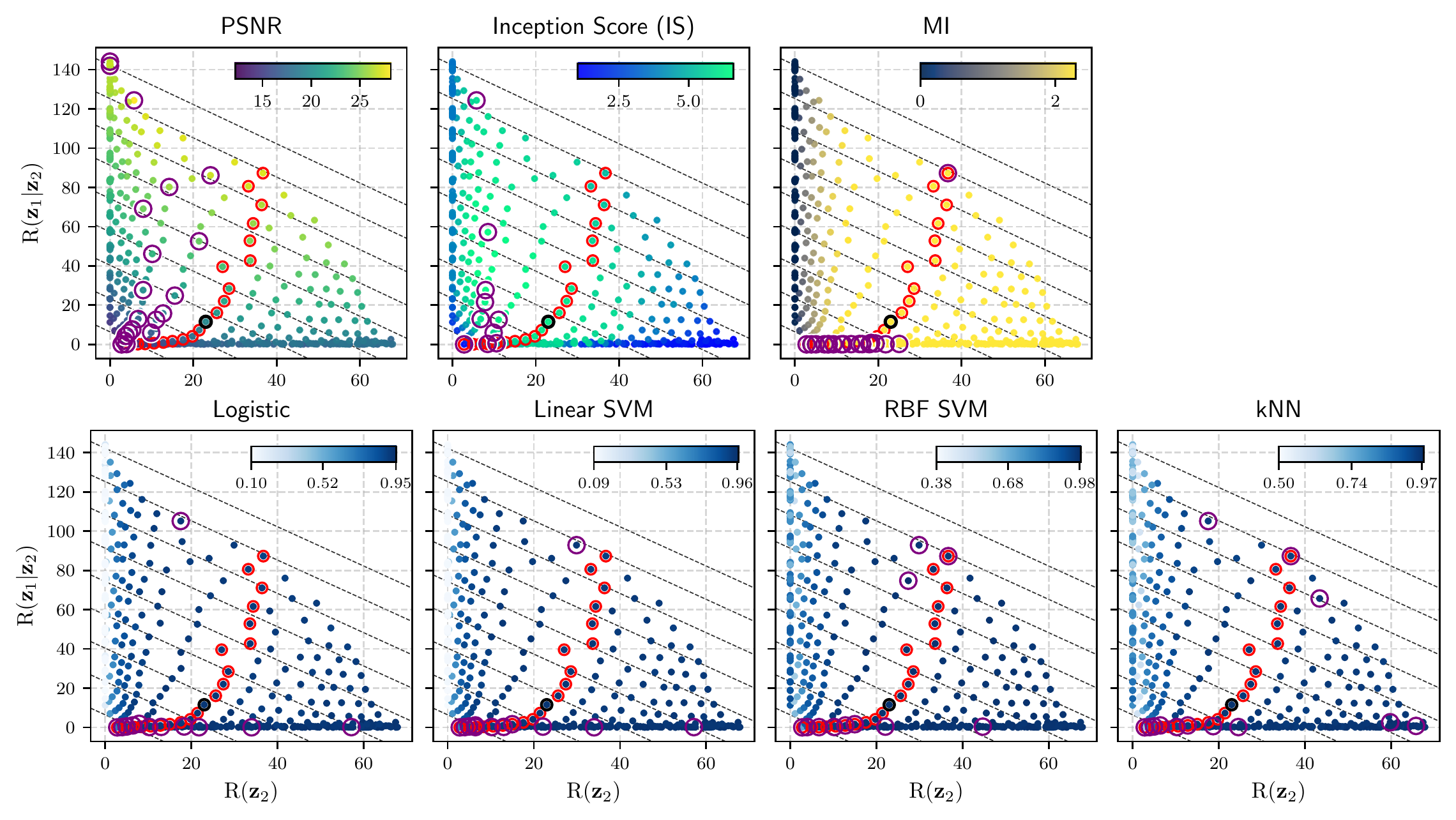}
  \caption{%
    Trade-offs between rates and all metrics we used in \secref{sec:results} from the generalized top-down HVAEs trained with MNIST.
    The results from the standard VAE (i.e. $\beta_2=\beta_1=1$) and the $\beta$-VAE (i.e. $\beta_2=\beta_1$) are marked with ``\tikzcircle[black, fill=white]{2pt}'' and ``\tikzcircle[red, fill=white]{2pt}''.
    The markers ``\tikzcircle[purp, fill=white]{2pt}'' highlight the optimal models selected using convex hull (see \figref{fig:acc_bound:svhn_rbf_rate} for details).
    The diagonal grid lines are references for equivalent total rates, i.e. points on the same line have the same total rates.
  }
  \label{fig:mnist:all_vs_rate}
\end{figure}

\newpage
\subsubsection{Results for LVAE on SVHN}
\label{app:lvae-svhn}

\begin{figure}[h]
  \centering
  \includegraphics[width=1.\textwidth]{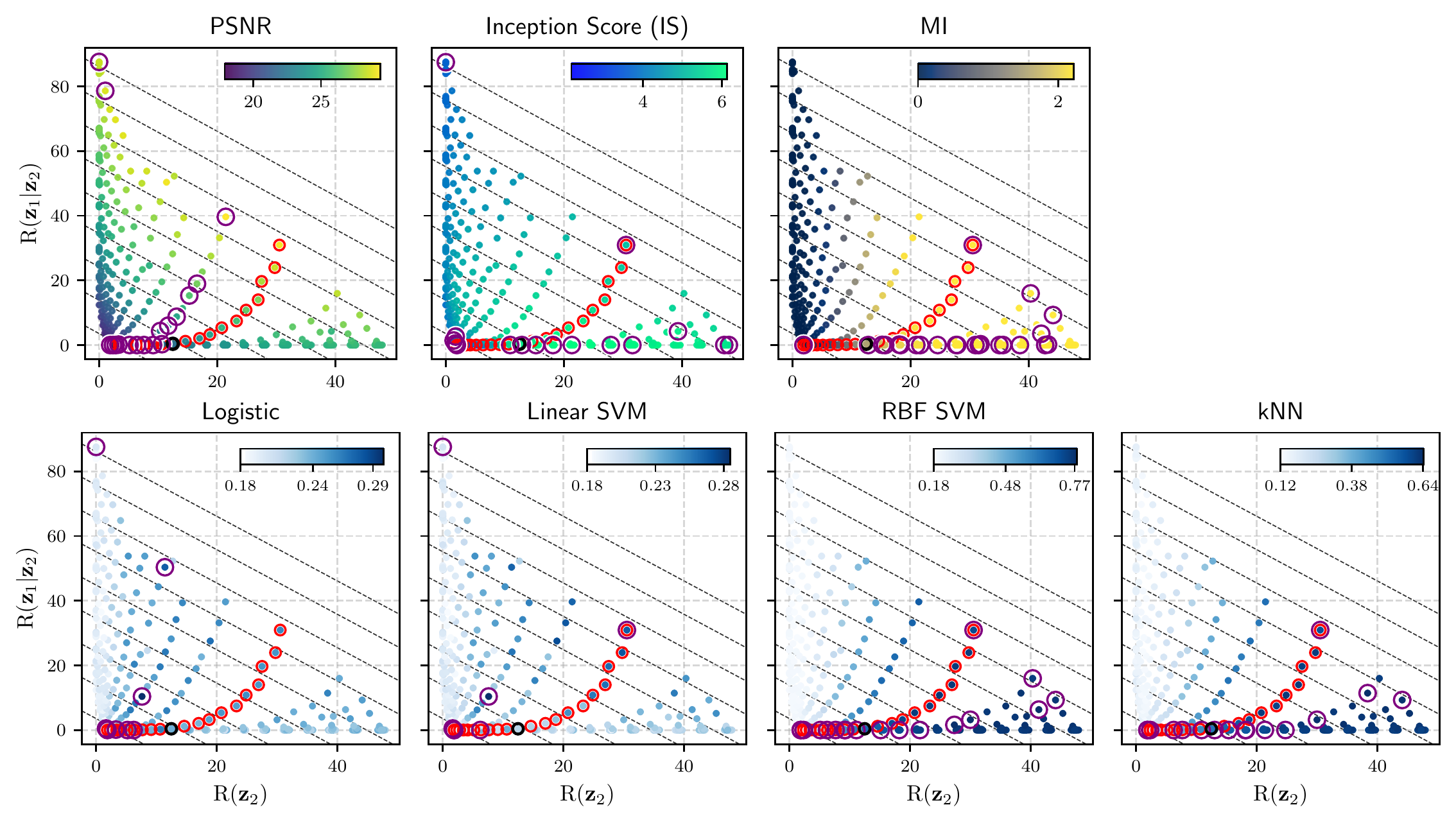}
  \caption{%
    Trade-offs between rates and all metrics we used in \secref{sec:results} from LVAE trained with SVHN.
    The results from the standard VAE (i.e. $\beta_2=\beta_1=1$) and the $\beta$-VAE (i.e. $\beta_2=\beta_1$) are marked with ``\tikzcircle[black, fill=white]{2pt}'' and ``\tikzcircle[red, fill=white]{2pt}''.
    The markers ``\tikzcircle[purp, fill=white]{2pt}'' highlight the optimal models selected using convex hull (see \figref{fig:acc_bound:svhn_rbf_rate} for details).
    The diagonal grid lines are references for equivalent total rates, i.e. points on the same line have the same total rates.
  }
  \label{fig:svhn:lvae_all_vs_rate}
\end{figure}

\newpage
\subsubsection{Results for generalized top-down HVAEs on CIFAR-10}

\begin{figure}[h]
  \centering
  \includegraphics[width=1.\textwidth]{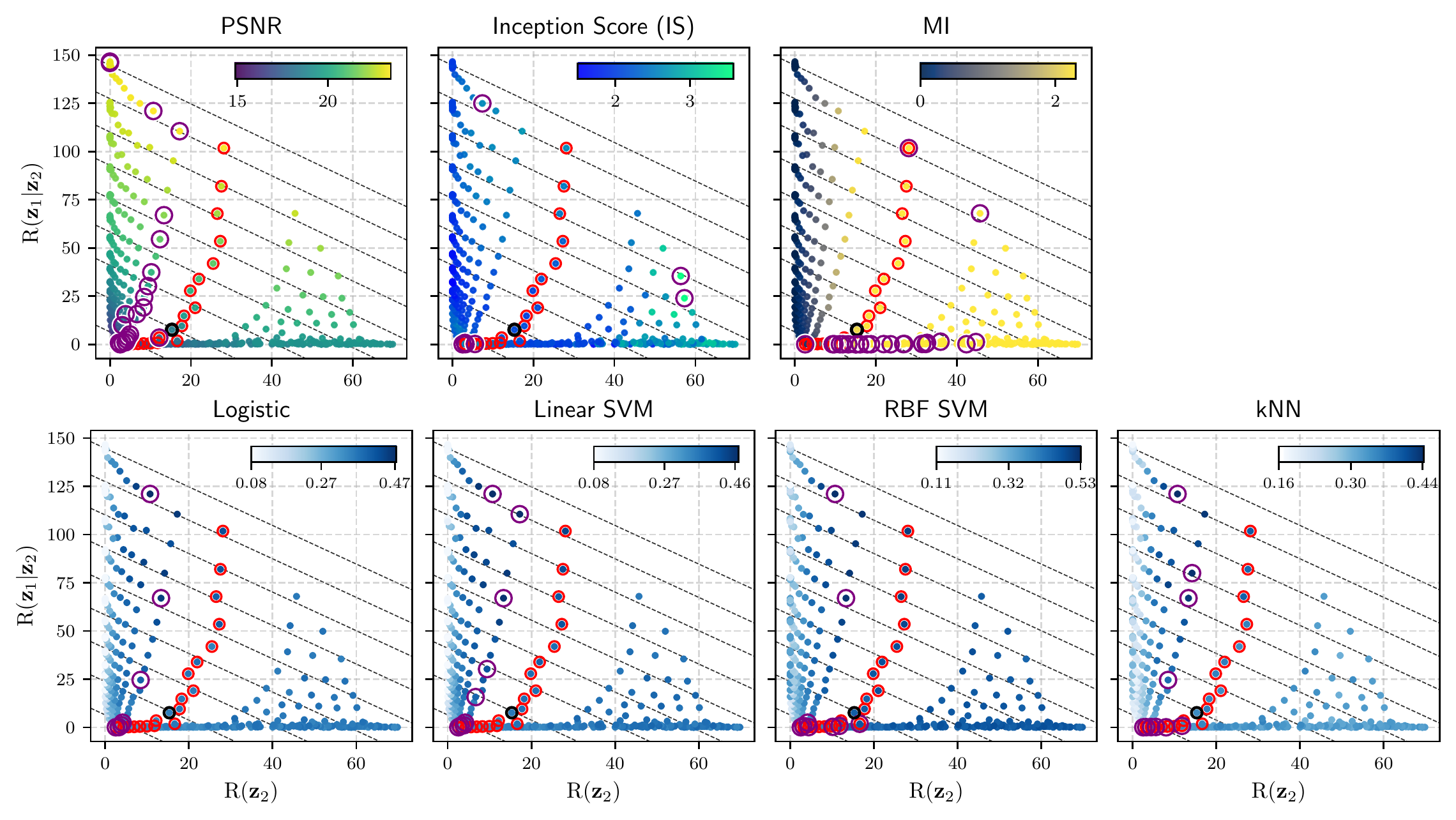}
  \caption{%
    Trade-offs between rates and all metrics we used in \secref{sec:results} from the generalized top-down HVAEs trained with CIFAR-10.
    The results from the standard VAE (i.e. $\beta_2=\beta_1=1$) and the $\beta$-VAE (i.e. $\beta_2=\beta_1$) are marked with ``\tikzcircle[black, fill=white]{2pt}'' and ``\tikzcircle[red, fill=white]{2pt}''.
    The markers ``\tikzcircle[purp, fill=white]{2pt}'' highlight the optimal models selected using convex hull (see \figref{fig:acc_bound:svhn_rbf_rate} for details).
    The diagonal grid lines are references for equivalent total rates, i.e. points on the same line have the same total rates.
  }
  \label{fig:cifar:hvae_all_vs_rate}
\end{figure}

\newpage
\section{Proof of The Bound on Classification Accuracy \label{app:proof-acc-bound}}

This section provides a proof of Eq.~\ref{eq:acc_bound} by reformulating the proof of Proposition~5 in the thesis by~\citet{meyen2016relation} into the notation used in the present paper.
We stress that this section contains no original contribution and is provided only as a convenience to the reader, motivated by reviewer feedback.
All credits for this section belong to~\citet{meyen2016relation}.

We consider an (unknown) true data generative distribution $p_\text{data}(y,\vx)$ for data~$\vx$ with (unobserved) true labels~$y$, and a hierarchical VAE with an inference model $q_\phi(\{\vz_\ell\} \,|\, \vx)$ of the form of Eq.~\ref{eq:explicit-top-down}.
Focusing on a single layer~$\ell$ of latents, we denote the joint probability over $y$, $\vx$, and~$\vz_\ell$ as
\begin{align}\label{eq:appendix-define-q}
  q(y,\vx,\vz_\ell) := p_\text{data}(y,\vx) \, q_\phi(\vz_\ell|\vx)
\end{align}
where the marginal $q_\phi(\vz_\ell|\vx)$ of $q_\phi(\{\vz_\ell\} \,|\, \vx)$ is defined as usual.
We further consider a classifier $p_\text{cls.}(y|\vz_\ell)$ that operates on~$\vz_\ell$.
Denoting its top prediction as $\hat y := \arg\max_y p_\text{cls.}(y|\vz_\ell)$, the classification accuracy is $\alpha := \E_q[\delta_{y,\hat y}]$, where $\delta$~is the Kronecker delta.

\begin{theorem}\label{theorem:bound-mut-inf-accuracy}
  The mutual information $I_q(y;\vz_\ell)$ between the latent representation~$\vz_\ell$ and the true label~$y$ under the distribution~$q$ defined in Eq.~\ref{eq:appendix-define-q} is lower bounded as follows,
  \begin{align}\label{eq:appendix-theorem-proposition}
    I_q(y;\vz_\ell) \geq f(\alpha)
    \qquad\text{with}\qquad
    f(\alpha) = H_{p_\text{data}}[y] - H_2(\alpha) - (1-\alpha)\log(M-1)
  \end{align}
  where $H_2(\alpha) = -\alpha\log\alpha - {(1-\alpha)\log(1-\alpha)}$ is the entropy of a Bernoulli distribution, ${H_{p_\text{data}}[y]\leq \log M}$ is the marginal entropy of the true labels, and $M$~denotes the number of classes.
\end{theorem}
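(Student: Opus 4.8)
The plan is to recognize Eq.~\ref{eq:appendix-theorem-proposition} as a form of Fano's inequality and prove it in three steps. First I would rewrite the mutual information as $I_q(y;\vz_\ell) = H_{p_\text{data}}[y] - H_q[y \,|\, \vz_\ell]$, using that the $y$-marginal of the joint $q$ in Eq.~\ref{eq:appendix-define-q} is exactly $p_\text{data}(y)$. So the claim is equivalent to the upper bound $H_q[y \,|\, \vz_\ell] \leq H_2(\alpha) + (1-\alpha)\log(M-1)$ on the residual label uncertainty, and everything reduces to controlling that conditional entropy.

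Second, I would reduce from $\vz_\ell$ to the prediction $\hat y := \arg\max_y p_\text{cls.}(y|\vz_\ell)$. Since $\hat y$ is a deterministic function of $\vz_\ell$, conditioning on $\vz_\ell$ is at least as informative as conditioning on $\hat y$, so $H_q[y \,|\, \vz_\ell] \leq H_q[y \,|\, \hat y]$ (formally, $y \to \vz_\ell \to \hat y$ is a Markov chain under $q$, and this is the data-processing step). It then suffices to bound $H_q[y \,|\, \hat y]$. Third, the Fano step: introduce the error indicator $E := \1[y \neq \hat y]$, so that $\Pr_q[E = 1] = 1-\alpha$. Expanding $H_q[y, E \,|\, \hat y]$ by the chain rule in two orders gives $H_q[E \,|\, \hat y] + H_q[y \,|\, E, \hat y] = H_q[y \,|\, \hat y] + H_q[E \,|\, y, \hat y]$, and the last term vanishes because $E$ is a deterministic function of $(y, \hat y)$. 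Hence $H_q[y \,|\, \hat y] = H_q[E \,|\, \hat y] + H_q[y \,|\, E, \hat y]$. I would bound the first summand by $H_q[E] = H_2(1-\alpha) = H_2(\alpha)$ (dropping the conditioning only increases entropy, and binary entropy is symmetric). For the second summand I split on the value of $E$: on $\{E = 0\}$ we have $y = \hat y$, so the conditional entropy is $0$; on $\{E = 1\}$ the label $y$ is supported on the $M-1$ classes distinct from $\hat y$, so its conditional entropy is at most $\log(M-1)$. Weighting by $\alpha$ and $1-\alpha$ gives $H_q[y \,|\, E, \hat y] \leq (1-\alpha)\log(M-1)$.

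Combining the three steps yields $H_q[y \,|\, \vz_\ell] \leq H_2(\alpha) + (1-\alpha)\log(M-1)$, which rearranges exactly to $I_q(y;\vz_\ell) \geq H_{p_\text{data}}[y] - H_2(\alpha) - (1-\alpha)\log(M-1) = f(\alpha)$; the auxiliary remark $H_{p_\text{data}}[y] \leq \log M$ is just the maximum-entropy bound for an $M$-ary variable. I expect the only mildly delicate point to be the data-processing step justifying $H_q[y \,|\, \vz_\ell] \leq H_q[y \,|\, \hat y]$ (and checking the degenerate cases $M = 1$ or $\alpha = 1$, where the bound is trivially $0$); the remaining manipulations are routine chain-rule identities for entropy.
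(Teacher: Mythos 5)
Your proof is correct, but it follows a genuinely different route from the paper's. You prove the bound as classical Fano's inequality: you first pass from $\vz_\ell$ to the discrete prediction $\hat y$ via the conditioning step $H_q[y\,|\,\vz_\ell] = H_q[y\,|\,\vz_\ell,\hat y] \leq H_q[y\,|\,\hat y]$ (valid since $\hat y$ is a deterministic function of $\vz_\ell$, and it does not matter that $\hat y$ comes from the classifier rather than from $q(y|\vz_\ell)$), and then decompose $H_q[y\,|\,\hat y]$ with the error indicator $E=\1[y\neq\hat y]$ and the chain rule, bounding $H_q[E\,|\,\hat y]\leq H_2(\alpha)$ and $H_q[y\,|\,E,\hat y]\leq(1-\alpha)\log(M-1)$. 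The paper instead never introduces $\hat y$ as a conditioning variable: it bounds the conditional entropy $\E_{y\sim q(y|\vz_\ell)}[-\log q(y|\vz_\ell)]$ \emph{pointwise in $\vz_\ell$} by splitting off the mass on $\hat y$ and using the maximum-entropy (uniform) bound on the remaining $M-1$ labels, and then takes the expectation over $\vz_\ell$, pulling the concave function $H_2$ out via Jensen's inequality and identifying $\E_{\vz_\ell\sim q(\vz_\ell)}[q(y\!=\!\hat y|\vz_\ell)]=\alpha$. The two arguments trade one inequality for another: your route uses ``conditioning reduces entropy'' twice (for $\hat y$ and for $E$) and needs no Jensen step, making it the standard textbook Fano argument that transparently isolates the role of the estimator; the paper's route stays entirely with the posterior $q(y|\vz_\ell)$ and, before applying Jensen, actually yields the slightly sharper intermediate bound $H_q[y|\vz_\ell]\leq\E_{\vz_\ell}[H_2(q(y\!=\!\hat y|\vz_\ell))]+(1-\alpha)\log(M-1)$, though this extra strength is not exploited. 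Both yield exactly $f(\alpha)$, and your handling of the degenerate cases ($M=1$, $\alpha=1$) is a harmless addition.
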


Before we prove Theorem~\ref{theorem:bound-mut-inf-accuracy}, we note that the function $f$ is strictly monotonically increasing on the relevant interval $[\max_y p_\text{data}(y),1]$.
Thus, $f$~is invertible and we obtain the following corollary:

\begin{corollary}
  The classification accuracy $\alpha$ is upper bounded as in Eq.~\ref{eq:acc_bound} of the main text, i.e.,
  \begin{align}\label{eq:appendix-corollar}
    \alpha \leq f^{-1}(I_q(y;\vz_\ell))
    \leq f^{-1}\big(\E_{p_\text{data}(\vx)}[ R(\vz_{\geq\ell})] \big).
  \end{align}
\end{corollary}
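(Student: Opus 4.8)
The plan is to prove Theorem~\ref{theorem:bound-mut-inf-accuracy} by the classical Fano inequality applied to the joint distribution $q(y,\vx,\vz_\ell)$ of Eq.~\ref{eq:appendix-define-q}, and then to obtain the Corollary by inverting the strictly monotone function $f$ and composing with the bound $I_q(y;\vz_\ell)\leq\E_{p_\text{data}(\vx)}[R(\vz_{\geq\ell})]$ already derived in Eq.~\ref{eq:mi-r}. As a first reduction I would write $I_q(y;\vz_\ell)=H_q[y]-H_q[y\,|\,\vz_\ell]$ and note that, since the $y$-marginal of $q$ is $p_\text{data}(y)$, we have $H_q[y]=H_{p_\text{data}}[y]$; hence it suffices to prove $H_q[y\,|\,\vz_\ell]\leq H_2(\alpha)+(1-\alpha)\log(M-1)$, which rearranges to $I_q(y;\vz_\ell)\geq f(\alpha)$ (one checks this $f$ coincides with the one displayed in Eq.~\ref{eq:acc_bound}).

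The key device is the Bernoulli variable $E:=\1[y\neq\hat y]$ with $\hat y=\arg\max_y p_\text{cls.}(y\,|\,\vz_\ell)$. Two observations matter: $E$ is a deterministic function of the pair $(y,\vz_\ell)$, and $\hat y$ is a deterministic function of $\vz_\ell$ alone. The first fact gives, via the chain rule, $H_q[y\,|\,\vz_\ell]=H_q[y,E\,|\,\vz_\ell]=H_q[E\,|\,\vz_\ell]+H_q[y\,|\,E,\vz_\ell]$. I then bound the two summands. Since conditioning cannot increase entropy and $q(E=0)=\alpha$, the first obeys $H_q[E\,|\,\vz_\ell]\leq H_q[E]=H_2(1-\alpha)=H_2(\alpha)$, using the symmetry $H_2(p)=H_2(1-p)$. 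For the second, averaging over $E$: on $\{E=0\}$ we have $y=\hat y$, a function of $\vz_\ell$, so $H_q[y\,|\,E=0,\vz_\ell]=0$; on $\{E=1\}$, $y$ ranges over at most $M-1$ values, so $H_q[y\,|\,E=1,\vz_\ell]\leq\log(M-1)$; hence $H_q[y\,|\,E,\vz_\ell]\leq(1-\alpha)\log(M-1)$. Adding the two bounds yields the claimed estimate and thus the theorem.

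For the Corollary, I would compute $f'(\alpha)=\log\frac{\alpha(M-1)}{1-\alpha}$, which is positive precisely for $\alpha>1/M$, so $f$ is strictly increasing on the relevant interval $[\max_y p_\text{data}(y),1]\subseteq[1/M,1]$; the inclusion holds because always $\max_y p_\text{data}(y)\geq 1/M$, and since the trivial ``predict the majority class'' classifier already attains accuracy $\max_y p_\text{data}(y)$, any reasonable (and in particular any optimal) classifier has accuracy in this interval. Applying the increasing inverse $f^{-1}$ to $I_q(y;\vz_\ell)\geq f(\alpha)$ gives $\alpha\leq f^{-1}(I_q(y;\vz_\ell))$, and chaining with $I_q(y;\vz_\ell)\leq\E_{p_\text{data}(\vx)}[R(\vz_{\geq\ell})]$ from Eq.~\ref{eq:mi-r} together with the monotonicity of $f^{-1}$ yields Eq.~\ref{eq:appendix-corollar}.

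I do not expect a genuine obstacle here: this is a textbook Fano argument, and essentially the only non-routine points are (a)~recognizing that $\hat y$ must be a deterministic function of $\vz_\ell$ so that $H_q[y\,|\,E=0,\vz_\ell]=0$, and (b)~restricting to the interval on which $f$ is invertible, which is what makes the Corollary a statement about the \emph{best} classifier rather than every classifier. Both are handled above; the remaining steps are standard manipulations of conditional entropies.
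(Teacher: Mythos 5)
Your proposal is correct, and the part that concerns the Corollary itself is exactly the paper's argument: observe $f'(\alpha)=\log\frac{\alpha(M-1)}{1-\alpha}>0$ on $[\max_y p_\text{data}(y),1]$, invert the monotone $f$ in $I_q(y;\vz_\ell)\geq f(\alpha)$, and chain with $I_q(y;\vz_\ell)\leq\E_{p_\text{data}(\vx)}[R(\vz_{\geq\ell})]$ from Eq.~\ref{eq:mi-r} using monotonicity of $f^{-1}$. Where you differ is in how you establish the underlying Theorem~\ref{theorem:bound-mut-inf-accuracy}: you give the textbook Fano proof, introducing the error indicator $E=\1[y\neq\hat y]$, using the chain rule $H_q[y\,|\,\vz_\ell]=H_q[E\,|\,\vz_\ell]+H_q[y\,|\,E,\vz_\ell]$, and bounding the two terms by $H_2(\alpha)$ (conditioning cannot increase entropy) and $(1-\alpha)\log(M-1)$ (at most $M-1$ residual labels when $E=1$). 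The paper instead argues pointwise in $\vz_\ell$: it splits off the $y=\hat y$ contribution to $H_q[y\,|\,\vz_\ell=\vz]$, bounds the remainder by the worst case in which the leftover mass is uniform over the other $M-1$ labels, and only then averages over $\vz_\ell$, invoking Jensen's inequality on the concave $H_2$ to reach $H_2(\alpha)$. The two routes are the two standard proofs of Fano's inequality and are tightly linked --- your step $H_q[E\,|\,\vz_\ell]\leq H_q[E]$ plays precisely the role of the paper's Jensen step, since $H_q[E\,|\,\vz_\ell]=\E_{\vz_\ell}\big[H_2(q(y\!=\!\hat y\,|\,\vz_\ell))\big]$ --- so the intermediate bounds coincide. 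Your version is more modular (it is literally Fano, hence citable wholesale), while the paper's is self-contained and makes explicit where the uniform-over-wrong-labels worst case enters; you are also slightly more careful than the paper in justifying why one may restrict to $\alpha\geq\max_y p_\text{data}(y)\geq 1/M$, which the paper only implicitly assumes by calling this the ``relevant interval.''
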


The second inequality in Eq.~\ref{eq:appendix-corollar} results from the bound $I_q(y;\vz_\ell) \leq \E_{p_\text{data}(\vx)}[ R(\vz_{\geq\ell})]$ derived in Eq.~\ref{eq:mi-r}, using the fact that $f^{-1}$ is monotonically increasing (since $f$ is).

\begin{proof}[Proof of Theorem~\ref{theorem:bound-mut-inf-accuracy}.]
We split the mutual information into two contributions,
\begin{align}\label{eq:appendix-split-iq}
  I_q(y;\vz_\ell)
  &= H_{p_\text{data}}[y] - H_q[y|\vz_\ell]
  = H_{p_\text{data}}[y] - \E_{\vz_\ell \sim q(\vz_\ell)}\big[ \E_{y\sim q(y|\vz_\ell)}[-\log q(y|\vz_\ell)] \big]
\end{align}
where, as clarified in the second equality, $H_q[y|\vz_\ell]$ is the expectation over~$\vz_\ell$ of the conditional entropy of~$y$ given~$\vz_\ell$, and $q(\vz_\ell)$ and $q(y|\vz_\ell)$ are marginals and conditionals of~$q$ (Eq.~\ref{eq:appendix-define-q}) as usual.

Since $H_{p_\text{data}}[y]$ is fixed by the problem at hand, finding a lower bound on $I_q(y;\vz_\ell)$ for a given classification accuracy~$\alpha$ is equivalent to finding an upper bound on the second term on the right-hand side of Eq.~\ref{eq:appendix-split-iq}, $H_q[y|\vz_\ell]=\E_{\vz_\ell \sim q(\vz_\ell)}[ \E_{y\sim q(y|\vz_\ell)}[-\log q(y|\vz_\ell)] ]$, with the constraint $\E_q[\delta_{y,\hat y}] = \alpha$.
We do this by upper bounding the conditional entropy ${\E_{y\sim q(y|\vz_\ell)}[-\log q(y|\vz_\ell)]}$ of~$y$ given~$\vz_\ell$ for all~$\vz_\ell$ independently, and then taking the expectation over $\vz_\ell \sim q(\vz_\ell)$.

For a fixed latent representation~$\vz_\ell$, we first split off the contribution to ${\E_{y\sim q(y|\vz_\ell)}[-\log q(y|\vz_\ell)]}$ from $y=\hat y$, where $\hat y = \arg\max_y p_\text{cls.}(y|\vz_\ell)$ is the label that our classifier would predict for~$\vz_\ell$,
\begin{align}\label{eq:appendix-split-iq-hatz}
  \E_{y\sim q(y|\vz_\ell)}[-\log q(y|\vz_\ell)]
  &= -q(y\!=\!\hat y|\vz_\ell) \log q(y\!=\!\hat y|\vz_\ell) - \sum_{y\neq \hat y} q(y|\vz_\ell) \log q(y|\vz_\ell).
\end{align}
Here, the second term on the right-hand side resembles the entropy of a distribution over the remaining $(M-1)$ labels ($y\neq\hat y$), except that the probabilities sum to ${(1-q(y\!=\!\hat y|\vz_\ell))}$ rather than one.
Thus, regardless of the value of $q(y\!=\!\hat y|\vz_\ell)$, this term is maximized if $q(y|\vz_\ell)$ distributes the remaining probability mass ${(1-q(y\!=\!\hat y|\vz_\ell))}$ uniformly over the remaining $(M-1)$ labels, i.e.,
\begin{align}
  \E_{y\sim q(y|\vz_\ell)}[-\log q(y|\vz_\ell)]
  &\leq -q(y\!=\!\hat y|\vz_\ell) \log q(y\!=\!\hat y|\vz_\ell) - (1-q(y\!=\!\hat y|\vz_\ell)) \log \frac{1-q(y\!=\!\hat y|\vz_\ell)}{M-1} \nonumber\\
  &= H_2(q(y\!=\!\hat y|\vz_\ell)) + (1-q(y\!=\!\hat y|\vz_\ell)) \log (M-1).
  \label{eq:appendix-split-iq-bound}
\end{align}
Plugging Eq.~\ref{eq:appendix-split-iq-bound} back into Eq.~\ref{eq:appendix-split-iq}, we obtain the bound
\begin{align}
  I_q(y;\vz_\ell)
  &\geq H_{p_\text{data}}[y] - \E_{\vz_\ell \sim q(\vz_\ell)}\big[
      H_2(q(y\!=\!\hat y|\vz_\ell)) \big] - \E_{\vz_\ell \sim q(\vz_\ell)}\big[1-q(y\!=\!\hat y|\vz_\ell)\big] \log (M-1).
\end{align}
We arrive at the proposition (Eq.~\ref{eq:appendix-theorem-proposition}) by pulling the concave function~$H_2$ out of the expectation using Jensen's inequality, and by then identifying ${\E_{\vz_\ell \sim q(\vz_\ell)}[q(y\!=\!\hat y|\vz_\ell)]}= {q(y\!=\!\hat y)}=\alpha$.
\end{proof}

\end{document}